\newtheorem{theorem}{Theorem}[section]
\newtheorem{lemma}[theorem]{Lemma}
\newtheorem{definition}[theorem]{Definition}
\def\eqref#1{equation~\ref{#1}}
\def\1{\bm{1}}
\def\eps{{\varepsilon}}
\def\vdelta{{\boldsymbol \delta}}
\def\vxi{{\boldsymbol \xi}}
\def\vtau{{\boldsymbol \tau}}
\def\vzero{{\bm{0}}}
\def\vzero{{\mathbf{0}}}
\def\ve{{\mathbf{e}}}
\def\vl{{\mathbf{l}}}
\def\vu{{\mathbf{u}}}
\def\vx{{\mathbf{x}}}
\def\vz{{\mathbf{z}}}
\def\mX{{\mathbf{X}}}
\DeclareMathAlphabet{\mathsfit}{\encodingdefault}{\sfdefault}{m}{sl}
\SetMathAlphabet{\mathsfit}{bold}{\encodingdefault}{\sfdefault}{bx}{n}
\def\gA{{\mathcal{A}}}
\def\gD{{\mathcal{D}}}
\def\gP{{\mathcal{P}}}
\newcommand{\E}{\mathbb{E}}
\newcommand{\R}{\mathbb{R}}
\newcommand{\cm}{\textsf{CountMin}}
\newcommand{\union}{\cup}
\newcommand{\iforest}{iForest}
\newcommand{\rrcf}{RRCF}
\newcommand{\pid}{PIDForest}
\newcommand{\Polya}{P{\'o}lya}
\newcommand{\MPT}{Mondrian \Polya{} Tree}
\newcommand{\PTree}{\Polya{} Tree}
\newcommand{\BetaDist}[2]{\text{Beta}\left(#1,#2\right)}
\newcommand{\ExpDist}[1]{\text{Exp}\left( #1 \right)}
\newcommand{\cdf}[1]{\textsf{CDF} \left( #1 \right)}
\newcommand{\Troot}{\epsilon}
\newcommand{\tree}{\mathsf{T}}
\newcommand{\leaves}[1]{\textsf{leaves}\left( #1 \right)}
\newcommand{\cut}{\xi}
\newcommand{\cdim}{\delta}
\newcommand{\TreePath}{\textsf{path}}
\newcommand{\Child}[1]{\textsf{child}\left( #1 \right)}
\newcommand{\lChild}[1]{\textsf{left}\left( #1 \right)}
\newcommand{\rChild}[1]{\textsf{right}\left( #1 \right)}
\newcommand{\sib}[1]{#1_{\textsf{sib} } }
\newcommand{\parent}[1]{\textsf{parent} \left( #1 \right)}
\newcommand{\CutDims}{\vdelta}
\newcommand{\CutLocs}{\vxi}
\newcommand{\CutTimes}{\vtau}
\newcommand{\MTree}[2]{\textsf{MT}\left( #1, #2 \right)}
\newcommand{\MondPolyaTree}[2]{\textsf{sMPT}\left( #1, #2 \right)}
\newcommand{\MPTreeInsert}[2]{\textsf{sMPT}_{+}\left( #1, #2 \right)}
\newcommand{\MPTreeDelete}[2]{\textsf{sMPT}_{-}\left( #1, #2 \right)}
\newcommand{\vol}[1]{\text{vol}\left( #1 \right)}
\newcommand{\Cut}[1]{\texttt{cut}\left(#1 \right)}
\newcommand{\Restrict}[1]{\texttt{restrict}\left(#1 \right)}
\newcommand{\bbox}[1]{\textsf{box}\left(#1\right)}
\newcommand{\LinDim}[1]{\textsf{LinearDim}\left(#1\right)}
\newcommand{\TreeLeft}{\textsf{left}}
\newcommand{\TreeRight}{\textsf{right}}
\newcommand{\TreeDepth}[1]{\textsf{depth}\left( #1\right)}
\newcommand{\PolyaTreeDepth}[1]{\textsf{PolyaDepth}\left( #1\right)}
\newacronym{1csvm}{$1$cSVM}{One Class SVM}
\newacronym{lof}{LOF}{Local Outlier Factor}
\newacronym{knn}{$k$NN}{K Nearest Neighbours}
\newacronym{pca}{PCA}{Principal Components Analysis}
\newacronym{svd}{SVD}{Singular Value Decomposition}
\newacronym{mp}{MP}{Mondrian Process}
\newacronym{mf}{MF}{Mondrian Forest}
\newacronym{pf}{PF}{\Polya{} Forest}
\newacronym{mpt}{sMPT}{Streaming Mondrian \Polya{} Tree}
\newacronym{bmpt}{bMPT}{Batch Mondrian \Polya{} Tree}
\newacronym{mpf}{MPF}{Mondrian \Polya{} Forest}
\newacronym{bmpf}{bMPF}{Batch Mondrian \Polya{} Forest}
\newacronym{smpf}{sMPF}{Streaming Mondrian \Polya{} Forest}
\newacronym{pyod}{PyOD}{Python Outlier Detection}
\newacronym{auc}{AUC}{area under curve}
\Crefname{equation}{Eq.}{Eqs.}
\Crefname{figure}{Fig.}{Figs.}
\Crefname{section}{Sec.}{Secs.}
\Crefname{algorithm}{Alg.}{Algs.}
\Crefname{theorem}{Thm.}{Thms.}
\Crefname{appendix}{Appx.}{Appxs.}
\Crefname{table}{Tab.}{Tabs.}
\Crefname{lemma}{Lemma}{Lemmas}
\Crefname{definition}{Def.}{Defs.}
\tikzset{every tree node/.style={minimum width=2em,draw,circle},
         blank/.style={draw=none},
         edge from parent/.style=
         {draw, edge from parent path={(\tikzparentnode) -- (\tikzchildnode)}},
         level distance=1.5cm}
\title{Interpretable Anomaly Detection with Mondrian \Polya{} Forests on Data Streams}
\author{%
  Charlie Dickens\thanks{Work done while an intern at Amazon Cambridge} \\
  University of Warwick\\
  \And
  {Eric Meissner\thanks{Work done while at Amazon Cambridge}} \\
  {University of Cambridge} \\
  \And
  {Pablo G. Moreno} \\
  {Amazon} \\
  \And
  {Tom Diethe} \\
  {Amazon} \\
}
\begin{document}

\maketitle

\begin{abstract}
Anomaly detection at scale is an extremely challenging problem of great practicality. When data is large and high-dimensional, it can be difficult to detect which observations do not fit the expected behaviour. 
Recent work has coalesced on variations of (random) $k$\emph{d-trees}
to summarise data for anomaly detection. 
However, these methods rely on ad-hoc score functions that are not easy to interpret, making it difficult to asses the severity of the detected anomalies or select a reasonable threshold in the absence of labelled anomalies.
To solve these issues, we contextualise these methods in a probabilistic framework which we call the Mondrian \Polya{} Forest for estimating the underlying probability density function generating the data and enabling greater interpretability than prior work.
In addition, we develop a memory efficient variant able to operate in the modern streaming environments. 
Our experiments show that these methods achieves
state-of-the-art performance while providing statistically interpretable anomaly scores.

\end{abstract}

\section{Introduction} \label{sec: intro}

The growing size of modern machine learning deployments necessitates automating
certain tasks within the entire pipeline from data collection to
model usage.
A key facet of this process at industrial scale is deciding on which data to
fit models.
Broadly, one can think of this as a subprocess in a continual learning
environment in which an algorithm should be able to return \emph{anomalies}
(points which do not conform to the behaviour of the rest of the dataset) and
monitor \emph{distribution or concept shift} \cite{diethe2019continual}.
Ideally, such a process would flag such anomalous points, along with
some information which enables interpretability to the user.

However, due to the scale and dimensionality of modern data,
building models 
for anomaly detection can often be difficult.
Often, storing or accessing an entire dataset at once is not possible, driving our interest in
the so-called \emph{streaming model} of computation.
Here, data $\mX \in \R^{n \times D}$ is assumed to be too large to hold in 
memory so observations $\vx_i \in \R^D$ are accessed sequentially.
Additionally, the stream is \emph{dynamic}, so that new points may be added and removed from $\mX$ over time.
To answer queries of the data, it is permissible to store
a \emph{small space summary} of $\mX$ which is 
typically constructed using only one full pass over $\mX$.
While the streaming model is reminiscent of an online machine learning model, there are subtle differences, namely, the desire to delete 
data from the model.

Given this problem setting we strive to design an anomaly detector which
satisfies the following requirements:
\begin {enumerate*} [label=(\roman*)]
\item{The data is so large that only a small-space summary
can be retained, built in a single-pass over $\mX$}; 
\item{The summary should permit the insertion and 
deletion of datapoints}; 
\item{Anomalies must be declared in the unsupervised setting and}
\item{The user should be able to understand why points are 
flagged as anomalies i.e. the results are 
\emph{interpretable}.} 
\end {enumerate*} 

Existing solutions to the unsupervised anomaly detection 
problem have coalesced on random $k$d trees known as 
\emph{Isolation Forest} (\iforest) \cite{liu2008isolation},
\emph{Robust Random Cut Forest} (\rrcf) 
\cite{guha2016robust}, and \emph{PiDForest} 
\cite{gopalan2019pidforest}.
A problem common to all of these is the issue of interpretability:
each method introduces their own vague heuristic `scoring'
mechanism to declare anomalies which can make it difficult to understand why points are flagged as anomalous.
Both \iforest{} and \rrcf{} cut the input domain at random
which does not guarantee good partitioning of the space.
In addition, \iforest{} and \pid{} are fixed data structures
which may not well adapt to local or temporal changes 
in behaviour, a likely scenario on large data streams,
as observed for \iforest{} in \cite{guha2016robust}.
A particular issue for \pid{} is that the cuts are optimised
\emph{deterministically} for the given subsample of $\mX$.
In practise, we find this process to be slower than all 
other methods, but more generally, this could be problematic
when the data is dynamic and cuts need adjusting or updating
depending on behavioural changes.

\textbf{Contributions}
We present the Mondrian \Polya{} Forest (\acrshort{mpf}),
a probabilistic anomaly detection algorithm that combines random trees with
nonparametric density estimators.
This leads to a full Bayesian nonparametric model providing reliable estimates of low probability regions without making strong parametric (distributional) assumptions. 
Moreover, anomalies are declared in the probability domain; thus our method is inherently interpretable and avoids heuristic scores needed in previous algorithms based on random trees.
As a second contribution, we present an extension amenable to streaming scenarios (Streaming Mondrian \Polya{} Forest (\acrshort{smpf})) by proposing
two-level modification of the Mondrian Forest that can be seen as a
probabilistic extension of the well known \rrcf{} algorithm.
The proposed data structure can be efficiently implemented on a data stream, which enables speed and scalability. 
Along the way, we answer questions raised in
\cite{lakshminarayanan2016decision} and \cite{balog2015mondrian}, 
concerning the use of our proposed trees for anomaly detection and density estimators, respectively.

\begin{figure}
    \centering
    \includegraphics[width=0.9\linewidth]{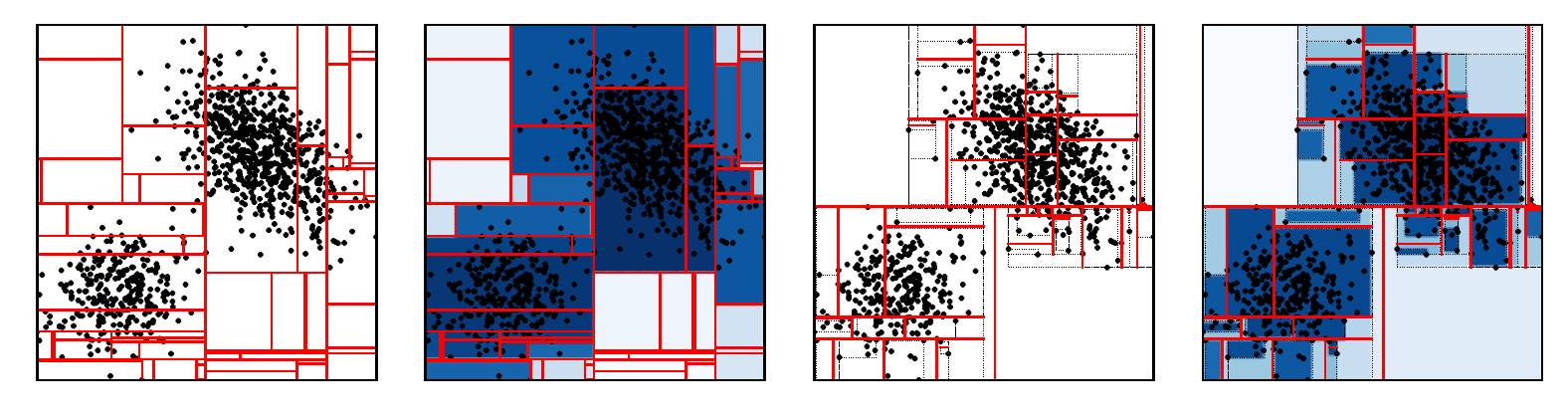}
    \caption{Our methods, the Mondrian \PTree{s}, are introduced
    in \Cref{sec: mondrian-polya-forest} and enable efficient space 
    partitioning with density estimation which we adopt for anomaly 
    detection.
    Denser regions are denoted by darker cells and results are
    averaged over a forest.
    Left to right: Mondrian Process, (batch) Mondrian
    \Polya{} Tree, Mondrian Tree, (streaming) Mondrian \Polya{} Tree.
    }
    \label{fig:all-mondrians}
\end{figure}
\textbf{Outline.}
\Cref{sec: preliminary} reviews prior work
from the Bayesian nonparametric literature on 
Decision, Mondrian \& \Polya{} Trees.
Our proposal, the \MPT{} is described in 
\Cref{sec: mondrian-polya-forest}.
Related work is reviewed in \Cref{sec: related-work} followed by
experiments in \Cref{sec: anomaly-detection}.
Proofs \& algorithms are in the Appendix.

\section{Preliminaries} \label{sec: preliminary}

We follow the notation introduced in
\cite{lakshminarayanan2014mondrian}. Given a fixed bounded domain $\gD \subset \R^D$,  a \textbf{decision tree}  over
$\gD$ is a hierarchical, nested, binary partition represented by a set of \textbf{nodes} $T$.
Every node $j$ has exactly one \textbf{parent} node $\parent{j}$ (with the
exception being the root node $\Troot$ which does not have a parent) and has
either 2 children if $j$ is an \textbf{internal} node or has 0 children if
$j$ is a \textbf{leaf} node;
The set of leaves is denoted $\leaves{\tree}$; 
(iii) To every node $j$ is associated a \textbf{subdomain} or \textbf{region} of the
input space $\gD$ denoted $B_j$; 
(iv) If $j$ is not a leaf, then the children of $j$ are constructed by making
a cut $\cut_j$ in dimension $\delta_j \in \{1,\dots,D\}$.
The children are $\lChild{j}$ and $\rChild{j}$ with
$\lChild{j}$ denoting the node which contains
the space $B_{\lChild{j}} = \{ \vx \in B_j : \vx_{\delta_j} \le \cut_j$ and
 $B_{\rChild{j}} = \{ \vx \in B_j : \vx_{\delta_j} > \cut_j \}$. 
The tuple $(\tree, \CutDims, \CutLocs)$ is a \textbf{decision tree}.

\subsection{Mondrian Processes \& Mondrian Forest}
\textbf{Mondrian Processes}
are families of (potentially infinite) hierarchical binary
partitions of a subdomain $\gD \subseteq \R^D$;
they can be thought of as a family of $k$d trees with height $h$, which
sequentially refine the partition of $\gD$ as $h$ increases 
\cite{roy2008mondrian}.
A Mondrian Tree can be defined as a restriction of the underlying 
Mondrian Process to an observed set of data points 
\cite{lakshminarayanan2014mondrian}.
Unlike the Mondrian Process, it allows for the online sampling of the 
stored tree as more data is observed.
Specifically, a \textbf{Mondrian Tree} $T$ can be represented by the tuple
$(\tree, \CutDims, \CutLocs, \CutTimes)$ for a decision tree
$(\tree, \CutDims, \CutLocs )$ whose cut dimensions $\CutDims$ are 
chosen with probability proportional to the feature lengths of data 
stored in a node
and $\CutTimes$ is a 
sequence of cut times
$\CutTimes = (\tau_j)_{j \in \tree}$
which begin from 0 at the root ($\tau_{\Troot}$) while monotonically increasing up to a \textbf{lifetime budget} $\lambda > 0$. 
For any node $j$, the \textbf{time} or \textbf{weighted depth} is the value
$\tau_j$, whereas  the \textbf{(absolute) depth} is the length of the
(unweighted) path from the root to $j$.
Given \emph{observations} $\mX$, the generative process for sampling Mondrian
Trees is denoted $\MTree{\mX}{\lambda}$.
For every node $j \in \tree$, the indices of the data stored at $j$ is 
denoted $N(j)$ (so we clearly have $N(\Troot) = \{1,\ldots, n\}$) and the 
regions of space a every node $B_j$ are the \emph{minimal axis-aligned box}
containing the data $\mX_{N(j)}$.
Additionally, the dimension-wise minima and maxima over 
$\mX_{N(j)}$ are stored in the vectors $\vl_j^{\mX}$ and 
$\vu_j^{\mX}$.
An example implementation is given in \Cref{alg: mondrian-forest}.

Mondrian Trees are attractive models as they can be sampled \emph{online}
as new data is observed. The key principle for this is \textbf{projectivity}, meaning that if 
$T \sim \MTree{\mX}{\lambda}$ and $\mX'$ is a subset of the data from $\mX$,
then the tree restricted to the datapoints $\mX'$  is drawn from  
$\MTree{\mX'}{\lambda}$ \cite{lakshminarayanan2014mondrian}.
Crucially, this enables the sequential building of Mondrian Trees:

\begin{lemma}[Projectivity] \label{lem:projectivity}
Let $\mX =  \{\vx_i\}_{i=1}^n, \mX' = \mX \cup \vx_{n+1}$.
Suppose $\textsf{MTx}(\mX',\lambda)$ is a random
function to extend the tree $T$.
  If $T \sim \MTree{\mX}{\lambda}$ and 
  $T' | T, \mX' \sim \textsf{MTx}(\mX',\lambda)$ then 
  $T' \sim \MTree{\mX'}{\lambda}$.
\end{lemma}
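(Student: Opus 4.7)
The plan is to prove projectivity by (i) giving an explicit recursive construction of the extension procedure $\textsf{MTx}$ and (ii) verifying by induction on tree depth that the resulting $T'$ has the correct marginal $\MTree{\mX'}{\lambda}$. The proof rests on two standard exponential identities: if $E_1 \sim \mathrm{Exp}(p)$ and $E_2 \sim \mathrm{Exp}(\Delta p)$ are independent, then $E = \min(E_1, E_2) \sim \mathrm{Exp}(p + \Delta p)$ with $\Pr[E = E_1] = p/(p + \Delta p)$, and each conditional distribution $E \mid \{E = E_i\}$ is again exponential with rate $p + \Delta p$ (memorylessness).

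First, I would describe $\textsf{MTx}$ at the root of $T$. Let $p = \sum_d (u^{\mX}_{\Troot, d} - l^{\mX}_{\Troot, d})$ be the perimeter of the current bounding box at the root and let $\Delta p = \sum_d \bigl( \max(u^{\mX}_{\Troot,d}, x_{n+1,d}) - \min(l^{\mX}_{\Troot,d}, x_{n+1,d})\bigr) - p$ be the extension induced by $\vx_{n+1}$. Sample $E \sim \mathrm{Exp}(\Delta p)$ (with the convention $E = \infty$ if $\Delta p = 0$). If $E < \tau_{\Troot}$ and $E < \lambda$, insert a \emph{new} internal node above the existing tree at time $E$: choose a cut dimension proportional to the per-dimension extensions and a cut location uniformly within the newly added slab, so that $\vx_{n+1}$ is separated from all of $\mX$. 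Otherwise, $\tau_{\Troot}$ and its cut $(\delta_{\Troot}, \xi_{\Troot})$ are retained, the bounding box is enlarged to reflect $\vx_{n+1}$, and we recurse into the child $c \in \{\lChild{\Troot}, \rChild{\Troot}\}$ whose region contains $\vx_{n+1}$. The base case is a leaf, where we simply add $n+1$ to $N(j)$ and enlarge the stored $\vl_j, \vu_j$.

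Next, I would verify the distribution. By the generative definition of $\MTree{\mX'}{\lambda}$, the root cut time is $T^* \sim \mathrm{Exp}(p + \Delta p)$ thresholded at $\lambda$, with dimension chosen proportional to the (enlarged) side lengths and location uniform on the (enlarged) box along that dimension. The key observation is the decomposition $T^* \stackrel{d}{=} \min(E_1, E_2)$ with $E_1 \sim \mathrm{Exp}(p)$ and $E_2 \sim \mathrm{Exp}(\Delta p)$ independent. Conditional on $T^* = E_2$ (probability $\Delta p/(p+\Delta p)$), one checks that the cut must lie in the newly added slab and that, conditional on this, the cut dimension is drawn proportional to the per-dimension extension and the location is uniform within the slab — exactly what $\textsf{MTx}$ samples when $E < \tau_{\Troot}$. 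Conversely, on $T^* = E_1$, the cut lies in the old box; here memorylessness says $E_1$ is distributed like the original $\tau_{\Troot}$, and the cut dimension/location are distributed as in the original Mondrian over $\mX$ (since the per-dimension sampling weights restricted to the old box coincide with those used in $\MTree{\mX}{\lambda}$, up to the normalisation already absorbed into the probability of $T^* = E_1$). The child containing $\vx_{n+1}$ then satisfies the induction hypothesis, and the other child is unchanged.

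The main obstacle is the bookkeeping in this conditional argument: one must check carefully that the size-biased dimension choice and the uniform location, when written conditional on $T^* = E_1$, produce exactly the marginal law that $\MTree{\mX}{\lambda}$ prescribes for $\tau_{\Troot}, \delta_{\Troot}, \xi_{\Troot}$, and correspondingly under $T^* = E_2$ produce exactly the law used by $\textsf{MTx}$ for the inserted node. Once this equality of conditional laws is established, the induction closes because the recursive call is of the same form on the child, with a strictly smaller data set along the recursion path, and the base case (a leaf where no cut survives the budget) is immediate.
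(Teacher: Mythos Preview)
The paper does not actually prove \Cref{lem:projectivity}; it is stated as a known result with a citation to \cite{lakshminarayanan2014mondrian}, and is later \emph{invoked} (not proved) in the proof of \Cref{lem:mpt-insertion}. So there is no in-paper proof to compare against. Your sketch is essentially the standard ``exponential race'' argument from that reference, and it is correct in outline: decompose the root cut time for $\MTree{\mX'}{\lambda}$ as $\min(E_1,E_2)$ with $E_1\sim\mathrm{Exp}(p)$, $E_2\sim\mathrm{Exp}(\Delta p)$; identify the event $\{E_2<E_1\}$ with ``cut falls in the added slab'' (which is exactly what $\textsf{MTx}$ samples when it inserts a new parent), and the event $\{E_1<E_2\}$ with ``cut falls in the old box'' (which reproduces the retained $(\tau_\Troot,\delta_\Troot,\xi_\Troot)$); then recurse. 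Your verification that, conditional on landing in the old box, the size-biased dimension and uniform location collapse to the original law is the right calculation.

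Two small points of care. First, your phrase ``memorylessness says $E_1$ is distributed like the original $\tau_\Troot$'' is slightly off: $E_1$ \emph{is} $\tau_\Troot$, and what you actually use is that $\tau_\Troot\mid\{\tau_\Troot<E\}\sim\mathrm{Exp}(p+\Delta p)$, which follows from the race computation $p\,e^{-pt}e^{-\Delta p\,t}\,dt$ rather than memorylessness per se. Second, the induction step needs the time offset handled explicitly: when you recurse into the child $c$ you are comparing a fresh $E'\sim\mathrm{Exp}(\Delta p_c)$ against $\tau_c-\tau_\Troot$ (equivalently, against $\tau_c$ with the clock started at $\tau_\Troot$), and the budget is effectively $\lambda-\tau_\Troot$. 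This is exactly how Algorithm~\ref{alg: mpt-insertion} is written (it tests $\tau_{\parent{j}}+E<\tau_j$), so your recursion matches, but the write-up should make the shifted-time induction hypothesis explicit to close the argument cleanly.
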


Hence, Mondrian \emph{Trees} are essentially, finite, truncated versions of 
Mondrian Processes in the regions of $\R^D$ where data is observed.
An ensemble of trees each independently sampled from $\MTree{\mX}{\lambda}$ is referred
to as a \textbf{Mondrian Forest}.

\subsection{\PTree}  \label{sec: ptree-intro}
The \PTree{} is a nonparametric model for estimating the density function 
over a nested binary partition of a bounded input domain, $\gD$.
We require the \PTree{} to decide how to distribute mass about the space
represented according to a random binary partition that we will sample.
First we will introduce the infinite version of the \PTree{} and then
demonstrate a restricted, finite \PTree{}
(further details can be found in \cite{muller2013polya}).

Suppose $\Pi_m =  \{ A_{b(j)}: j = 1,\dots,2^m\} $ is the depth $m$ partition of $\gD$
into $2^m$ disjoint subsets $j$, indexed by the binary
string $b(j) = e_0 e_1 \dots e_{m-1}$.
If we refine $\Pi_{m}$ to $\Pi_{m+1}$ by splitting every 
$A_{b(j)} = A_{b(j)0} \cup A_{b(j)1} $ with 
$A_{b(j)0} \cap A_{b(j)1} = \emptyset$ to generate 
then it remains to understand how mass is allocated to all subsets in 
$\Pi_{m+1}$.
The \PTree{} treats probability mass as a random variable which is 
distributed throughout $\Pi_m$ through split 
probabilities $\pi_{b(j)} \sim \BetaDist{\alpha_{b(j)0}}{\alpha_{b(j)1}}$,
each $\pi_{b(j)}$ being sampled independently across all levels of 
refinement, $m$.
The probability $\pi_{b(j)}$ is the probability of reaching the 
``right-hand side'' of the split: that is, choosing a point that is 
in $A_{b(j)1}$ given that the point is in $A_{b(j)}$.
Overall, the \PTree{} has two sets of parameters: the nested partition
$\Pi = \{ \Pi_m : m \ge 0 \}$ and the Beta distribution parameters
$\gA = \{ (\alpha_{b(j)0}, \alpha_{b(j)1} ) : j = 1,\dots, 2^m \}$.

A \textbf{\PTree{} over infinite depth partition} allows 
$m \rightarrow \infty$ and 
is capable of modelling absolutely continuous functions if the
$\alpha_{b(j)} = \Theta(m^2)$ or discrete functions if 
$\alpha_{b(j)} = \Theta(2^{-m})$.
Rather than let $m \rightarrow \infty$ 
a \textbf{\PTree{} over a finite depth partition} assumes the partition
is truncated at some fixed $m$.
Probability mass is then assumed to be distributed uniformly 
within the final $2^m$ bins.
An implementation of the \PTree{} is given in \Cref{alg: polya-tree-sampling} when the 
partition $\Pi_m$ is defined by a binary tree of height $m$,
as opposed to the online setting.
The predicitve distribution for density estimation over a finite 
partition is the product of expectations of the Beta distributions along the 
leaf-to-root path \cite{muller2013polya}.

\section{Mondrian \Polya{} Forest{}} \label{sec: mondrian-polya-forest}
Our contributions combine the \PTree{} structure with either a 
finite (truncated) Mondrian Process which operates in a batch 
setting or a Mondrian Tree which can be maintained over a data stream.
We then construct a forest using these revised trees which 
estimate the density function \& perform anomaly detection.
Using Mondrian Trees for anomaly detection was mentioned in
\cite{lakshminarayanan2016decision} and density estimation in
\cite{balog2015mondrian}, however, no feasible solutions were offered
so our alterations answer these unresolved questions.
Our methods are referred to as batch or streaming Mondrian \PTree{s}
and a visual comparison is given in \Cref{fig:all-mondrians}.
First we will introduce the batch solution.

\textbf{Batch Mondrian (Process) \PTree{} (\acrshort{bmpt})}
Let $T \sim MP(\gD, \lambda)$ denote the binary tree sampled from the \emph{Mondrian Process} with lifetime $\lambda > 0$.\footnote{Note that this is \emph{not} a ``Mondrian Tree'' as defined in 
\Cref{sec: preliminary}!}
A 
\acrshort{bmpt} is the combination of 
$T$ with the \PTree{} density model.
For every node $j$ in the tree, the prior Beta parameters $\alpha_{b(j)}$
can be computed exactly 
from the volume of every node and incremented by the number of points in
$j$ to obtain the posterior parameters.
We drop the ``process'' \& refer to this method as  ``batch Mondrian \Polya{} Tree''.

Since the Mondrian Process (\acrshort{mp}) on a bounded domain fully accounts for the 
entire space, we can 
easily combine the \acrshort{mp} with the \PTree.
All subsets of the partition induced by the 
\acrshort{mp} are covered by a region where is 
\acrshort{mp} is instantiated.
Hence, all the volume computations necessary for 
the \PTree{} are well-defined.
However, combining the \PTree{} with the online version of 
the \acrshort{mp} (i.e. the Mondrian Tree) is much more challenging.
The alteration we make is necessary as nai\"vely imposing the \PTree{} prior over a Mondrian Tree 
would leave `empty' space across the domain as cuts are defined \emph{only}
on regions of space where data is observed.
The \PTree{} cannot handle this scenario as refining a bin
$A_{b(j)}$ into children $A_{b(j0)}, A_{b(j1)}$ requires that 
$A_{b(j0)} \cap A_{b(j1)} = \emptyset$ and 
$A_{b(j0)} \cup A_{b(j1)} = A_{b(j)}$ which is clearly false 
if we immediately restrict to the data either side of a cut.
This is an issue for density estimation as it is not clear how to assign mass to the regions
where data is not observed,
exactly the issue encountered in \cite{balog2015mondrian}.

A natural question is why use Mondrian Trees as opposed to Mondrian 
Processes?
There are two reasons:
firstly, Mondrian Processes are infinite structures so they cannot 
always be succinctly represented.
The restriction to a finite lifetime $\lambda$ does not 
guarantee that the tree is finite, so over $\R^D$, it would be possible
to have an infinitely deep tree with infinitely many leaves.
Secondly, in high-dimensional space, there could be many empty regions 
of space with no observed data.
A Mondrian Process may repeatedly cut in the empty regions 
yielding many uninformative cuts; thus, a very deep tree would be 
necessary.
On the other hand, Mondrian Trees focus cuts on the regions of space
where data is observed, which ensures that cuts are guaranteed to lie 
on a subset of the domain which will split the data.\footnote{This provides no guarantee on the quality of the cuts,
merely that they exist on the region of space where they will pass
through observed data with certainty.}
The price to pay for this advantage is that Mondrian Trees 
are unable to model data lying outside of the bounding box upon which 
they are defined.
This motivates our altered method, the streaming \MPT{} 
which combines the scalability of the 
Mondrian Tree generative process with an added twist to
cheaply model behaviour beyond the observed data.

\subsection{Streaming \MPT} \label{sec: mpt-intro}

\forestset{
 declare toks={elo}{}, 
 anchors/.style={anchor=#1,child anchor=#1,parent anchor=#1},
 dot/.style={tikz+={\fill (.child anchor) circle[radius=#1];}},
 dot/.default=2pt,
 decision edge label/.style n args=3{
 edge label/.expanded={node[midway,auto=#1,anchor=#2,\forestoption{elo}]{\strut$\unexpanded{#3}$}}
 },
 decision/.style={if n=1
 {decision edge label={left}{east}{#1}}
 {decision edge label={right}{west}{#1}}
 },
 decision tree/.style={
 for tree={
 s sep=1.0em,l=8ex,
 if n children=0{anchors=north}{
 if n=1{anchors=south east}{anchors=south west}},
 math content,
 },
 anchors=south, outer sep=4pt,
 dot=3pt,for descendants=dot,
 delay={for descendants={split option={content}{;}{content,decision}}},
 }
}

\begin{figure}%
    \centering
    \subfloat[An \acrshort{mpt} on $\gD = \bbox{\{\vx_1,\dots,\vx_4 \}}$.
    Strings in the top left of any box denote the encoding of that
    part of the space under the random partitioning; set
    notation corresponds to leaves of \Cref{fig:mpt-binary-tree}
    \label{fig:mpt-example-plot}]
    {\includegraphics[scale=0.9]{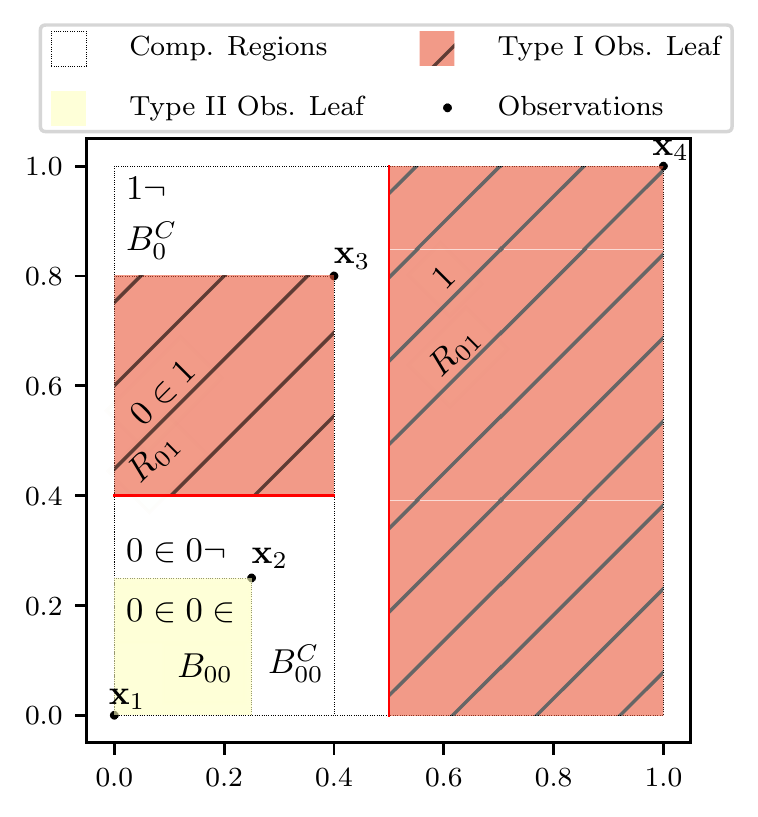}}%
    \quad
    \subfloat[The \acrshort{mpt} with parameters indicating mass distribution
    and space partition from \Cref{fig:mpt-example-plot}]{{
    \begin{forest} decision tree
 [{$\epsilon: P_{\epsilon} = 1.0$},plain content
 [{$R_{0}$};{\mu_{\chi_{\Troot}}},plain content,elo={yshift=4pt}
 [{$B_{0}$};{\mu_{\rho_0}},plain content
 [{R_{00}};\mu_{\chi_{00}}
 [{B_{00}};{\mu_{\rho_{00}}}]
 [{B_{00}^C};{1 - \mu_{\rho_{00}}}]
 ]
 [{R_{01}};{1 - \mu_{\chi_{00}}}]
 ]
  [{$B_{ 0}^C$};{\mu_{1 - \rho_0}},plain content]
 ]
 [{$R_{1}$};{1 - \mu_{\chi_{\Troot}}},plain content,elo={yshift=4pt}
 ]
 ] 
 ]
\end{forest} \label{fig:mpt-binary-tree}
    }}%
    \caption{A streaming \MPT{}.  From left to right, the leaf/data encodings are:
    $\{ (0\in0\in : \{\vx_{1},\vx_2 \}),
    \{ (0\in0\neg : \text{complementary leaf}),
    (0\in1 : \{\vx_3 \}),
    (1 \neg :  \text{complementary leaf}),
    (1 : \vx_4)\}$}
    \label{fig:mpt-example-tree}
    
\end{figure}

The standard Mondrian Tree \emph{only} considers (sub-)regions 
where data is observed: information about space 
without observations is discarded.
We decouple the splitting method used to generate Mondrian Trees into
a two-step procedure which will allow a tree sampled
in the Mondrian Tree $T \sim  \MTree{\mX}{\lambda}$ to implicitly 
represent the entirety of a given input domain so we can appeal to the 
\PTree{} model.

Our modified Mondrian Tree is the 
\textbf{streaming Mondrian \Polya{} Tree}
(\acrshort{mpt}) and draws from this structure are 
denoted $T \sim \MondPolyaTree{\mX}{\lambda}$.
We generate an \acrshort{mpt} by drawing $T \sim \MTree{\mX}{\lambda}$
and introducing `pseudosplits' in $T$ to generate a new, implicitly 
defined $T_s$.
These pseudosplits distinguish between the regions of space where data is
observed, and those which do not contain any data so the only extra
space cost we incur is that of storing the parameters for the Beta 
distributions necessary for the \PTree.
An example is illustrated in \Cref{fig:mpt-example-plot} and is 
implemented in \Cref{alg: mpt}.

Let $T = (\tree, \CutDims, \CutLocs, \CutTimes) \sim \MTree{\mX}{\lambda}$
be a Mondrian Tree.
We define two functions over nodes $j \in \tree$ to decouple the 
cutting of space from the restriction to bounding boxes.
First recall that every node $j$ has a minimal axis-aligned bounding box $B_j$:
(i) $\Cut{j}$ samples a cut dimension $\cdim_j$ and cut location $\cut_j$ splitting
$j$ into disjoint sets $R_{j0}, R_{j1}$ with $R_{j0} \cup R_{j1} = B_j$, and 
$R_{j0}, R_{j1}$ representing the \emph{region} of space less than cut $\xi_j$ and greater 
than $\xi_j$, respectively.
For $k=0,1$, the set $R_{j k}$ is a bounded region of space which is \emph{not  
restricted} to the observations located at the child nodes $\mX_{N(\Child{j})}$.
This motivates the subsequent `split' to maintain the Mondrian Tree 
structure of only performing $\Cut{\cdot}$ on bounding boxes of 
observed data; 
(ii) $\Restrict{j}$ acts on the pair $R_{j0}, R_{j1}$, returning 
$\left\{ \left( B_{\lChild{j}}, B_{\lChild{j}}^C \right), 
\left( B_{\rChild{j}}, B_{\rChild{j}}^C \right) \right\}$ such that 
$B_{\lChild{j}} \cup B_{\lChild{j}}^C = R_{j0}$ and are pairwise disjoint.
The same property holds for $R_{j1}$ with the right-hand child nodes.
We refer to $B_{\Child{j}}$ as the \textbf{observed region} and 
$B_{\Child{j}}^C$ as the \textbf{complementary region} where $\Child{j}$ can be 
either $\lChild{j}$ or $\rChild{j}$.
We term this as a `pseudosplit' because all of the information 
required to perform $\Restrict{\Cut{j}}$ is already defined in the generation 
of the Mondrian Tree.

\textbf{Combining \acrshort{mpt} with Finite \PTree{}.}
Decoupling the `cut-then-restrict' allows the Mondrian Tree $T$ 
to encode a valid hierarchical partition over the entirety of the input domain $\gD$.
Additionally, we only ever store the $T$ and the extra Beta parameters
from the \PTree{} structure as $T$
implicitly defines the \acrshort{mpt} $T_s$.
These Beta parameters are \textbf{cut parameters}: $\chi_{j0}, \chi_{j1}$, index 0 for less than $\xi_j$, 1 otherwise; 
\textbf{restriction parameters} 
$\rho_{j0 \in}, \rho_{j0 \neg}$ indexed by $j0\in$ for the observed region
$B_{\lChild{j}}$ and $j0\neg$ for the complementary region $B_{\lChild{j}}^C$
(and similarly for $\rho_{j1 \in}, \rho_{j1 \neg}$) at node $j$.

The following distinctions are necessary to ensure all volume
 comparisons for the \PTree{} construction are on $D$-dimensional 
 hypervolumes while the final distinction is necessary to account for 
 the mass associated to regions with no observations.
 \begin {enumerate*} [label=(\roman*)]
   \item{\textbf{Observation leaves (Type I)}:
   Any leaf $l$ for which the bounding box 
   at $l$, $B_l = \bbox{X_{N(l)}}$ has at least one of the dimensions with zero
   length.
   Note that this includes the case when only one datapoint is stored
   in $l$; 
   }

   \item{\textbf{Observation leaves (Type II)}:
   Any leaf $l$ formed from a cut at node $j$ which contains two or more datapoints and $B_l = \bbox{X_{N(l)}}$ has all $D$ lengths positive; 
  }

   \item{\textbf{Complementary leaves}: Leaves formed in a region where no
   observations are made.
    }
 \end{enumerate*}

Pseudosplits in the Mondrian Tree $T$ are used to generate the 
\acrshort{mpt}, so it is necessary to revise the indexing scheme of the
nested partition over domain $\gD$.
For every node $j \in \tree$ of (absolute) depth $k$ in
$T$, we generate a set of encodings for the spaces represented in $T_s$:
$b(j) = c_0 r_0 \dots c_k r_k$.
The length of $b(j)$ is at most twice the maximum absolute depth 
of $T$ and indexes all nodes in the (implicitly defined) $T_s$.
The symbol $c_i \in \{ 0,1 \}$ indicates ``less than'' or ``greater than'' the cut at level
$i$, and $r_i \in \{ \in, \neg, \emptyset \}$ indicates whether the node represents
the observed region ($\in$), complementary region ($\neg$), or can be $\emptyset$
if the leaf is a Type I observed leaf as no restriction is performed.

\subsubsection{Model Parameters for the \acrshort{mpt}} \label{sec: mpt-params}
For a Mondrian Tree $T = (\tree, \CutDims, \CutLocs, \CutTimes)$ we now
show how to set the parameters for the \PTree{} over the induced 
\acrshort{mpt}, $T_s$.
\begin{itemize}

\item{Let $P_j$ denote the probability mass associated with node $j$.
We define the probability mass associated with the root $\Troot$ to
be $P_{\Troot} = 1$.}

\item{\textbf{Setting the prior.}
At every internal node $j \in \tree$, the probability of a point
being greater than the cut $\xi_j$ is given by a Bernoulli 
with parameter $\kappa_j$ whose prior is 
$\BetaDist{\chi_{j0}}{\chi_{j1}}$.
Likewise, the probability of a point being in the observed 
region after cut $\xi_j$ follows a\footnote{Note that we choose this
ordering so the expected
value of the Beta distribution is associated with being in 
the observed region after every cut.
See example in \Cref{fig:mpt-example-tree}.}  $\text{Bernoulli}(1-\theta_j)$ whose prior is 
$\BetaDist{\rho_{j k \in}}{\rho_{j k \neg}}$
for $k=0,1$, depending on which side 
of the cut the point lies.
To set the \PTree{} parameters, we need to evaluate the volumes of 
various parts of the space, this will be denoted $V_X = \vol{X}$ 
for $X \in \{R_{\cdot}, B_{\cdot}, B_{\cdot}^C\}$.
The \acrshort{mpt} $T_s$ is defined from a two-stage split so we correct
the `depth' of nodes in $T_s$ from the usual \PTree{} construction by
a simple translation: if $j \in \tree$ has depth $d_j$ then
$\PolyaTreeDepth{j}  = (2d_j, 2d_j+1)$.
The prior strength is controlled by hyperparameter $\gamma > 0$.
Parameters for cut ($\chi_{\cdot}$)  and 
restriction ($\rho_{\cdot}$) are then:
\begin{equation} \label{eq: mpt-prior}
  \begin{split}
    \chi_{j0} &= \gamma (2d_j + 1)^2 
    \nicefrac{ V_{R_{j0}}}{ ( V_{R_{j0}} +  V_{R_{j1}} ) }\\
    \chi_{j1} &= \gamma (2d_j + 1)^2 
    \nicefrac{ V_{R_{j1}}}{ ( V_{R_{j0}} +  V_{R_{j1}} ) }\\
  \end{split}
\quad
  \begin{split}
    \rho_{j k \in} &= \gamma (2d_j + 2)^2 
    \nicefrac{ V_{B_{jk\in}}}{ ( V_{B_{jk\in}} +  V_{B_{jk\neg}} ) }\\
    \rho_{j k \neg} &= \gamma (2d_j + 2)^2 
    \nicefrac{ V_{B_{jk\neg}}}{ ( V_{B_{jk\in}} +  V_{B_{jk\neg}} ) }\\
  \end{split}
\end{equation}

}

\item{\textbf{Distributing Mass.}
The predictive distribution of the \PTree{} over a finite 
depth partition is the product of expected value of Beta distributions on the leaf-root path.
There can be maintained exactly over all nodes for both cutting \& restricting.
We allocate a 
$\mu_{\chi_j} = \E(\BetaDist{\chi_{j0}}{\chi_{j1}})$ fraction of $j$'s mass to $R_{j0}$ so $P_{j0} = P_j \mu_{\chi_j}$ \& 
$P_{j1} = P_j (1 - \mu_{\chi_j})$.
Next, we repeat for the restriction step which allots 
$\mu_{\rho_{j0}} = \E(\BetaDist{\rho_{j0\in}}{\rho_{j0\neg}})$ to $B_{j0}$ so  $P_{j0\in} = P_{j0} \mu_{\rho_{j0}}$ and 
$P_{j0\neg} = P_{j0} (1 - \mu_{\rho_{j0}})$; likewise for 
above the cut $\xi_j$.

}

\item{\textbf{The Posterior Distribution.}
By Beta-Binomial conjugacy, on inserting data, the parameters of any given
Beta distribution can be updated by the number of datapoints observed at a node.
In the Mondrian Tree, $n_0$ points are passed from $j$ to $\lChild{j}$
 and $n_1$ points to $\rChild{j}$.
 Hence, all of the $n_0$ points in $\lChild{j}$ are both at most the 
 cut value $\xi_j$ and present in the bounding box $B_{\lChild{j}}$ 
 while the opposite is true for $n_1$ and $\rChild{j}$.
 Therefore, we obtain the simple posterior update procedure:

\begin{equation} \label{eq: mpt-posterior}
  \chi_{jk}^* = \chi_{jk} + n_k, \quad
  \rho_{j k \in}^* = \rho_{j k \in} + n_k, \quad
    \rho_{j k \neg}^* = \rho_{j k \neg}, \quad \mbox{for }k=0,1
\end{equation}
}
%
%
\item{Mass in the leaves of a finite \PTree{} is assumed to be 
distributed uniformly;  if a point 
falls into a leaf $j$, then the mass associated to that point is 
simply the product of the expected Beta distributions on the path from 
$\Troot$ to $j$, and its density is the mass divided by the volume.}

\end{itemize}
It is necessary to retain the volumes of both observed and complementary regions for the restriction parameters.
However, this is straightforward given the cut and volume at node $j$ (see \Cref{sec: mpt-appendix}).


\textbf{Complexity: Instantiating the \acrshort{mpt}.}
The complexity of combining the \PTree{} with the Mondrian Tree incurs only mild overhead.
Let $T = (\tree, \CutDims, \CutLocs, \CutTimes)$ denote the stored Mondrian Tree which 
generates the \acrshort{mpt}.
The extra space necessary to use \acrshort{mpt} for density estimation is  $ 7 |\tree|$ due to the extra counters needed for every Beta distribution (i.e. $\chi_{j0},\chi_{j1},
\rho_{jk\in},\rho_{jk\neg}, k=0,1$) and the probability mass float $P_j$. 
At every node we must compute the volume at a cost
of $O(D)$ which is $O(D |\tree|)$ over the entire tree but this can be done on-the-fly as the tree is constructed.

\textbf{\acrshort{mpt}: Insertions and Deletions.}
For a \acrshort{mpt}, we provide efficient algorithms to insert and 
delete points over the data stream.
A full treatment is given in \Cref{sec: mpt-appendix}: the pertinent points
being that we retain \emph{projectivity} due to the underlying Mondrian 
Tree which generates the \acrshort{mpt}.
Deletions require a little more work as removal points could lie on a 
bounding box, so it is necessary to check how this interacts with the
lifetime of the stored tree.

\textbf{Example.}
In \Cref{fig:mpt-example-tree} we present an instantiation of the \acrshort{mpt}.
Observe that the \emph{Mondrian Tree} which is used to generate the partition
in \Cref{fig:mpt-example-plot} splits the entire input domain into 
disjoint subsets of Type I/II observed leaves and complementary regions.
It also implicitly encodes the associated \acrshort{mpt} as given in 
\Cref{fig:mpt-binary-tree}.
The calculations to obtain density estimates over this tree are given in \Cref{sec:mpt-example-calcs}.

 \subsection{Mondrian \Polya{} Forest for Density Estimation
 \& Anomaly Detection}
 \label{sec: mpf-anomaly}
 Recall that an independently sampled ensemble of 
 batch/streaming \MPT{s} is 
 referred to as a batch/streaming Mondrian \Polya{} Forest
 (\acrshort{bmpf}), (\acrshort{smpf}),
 $F = \cup_i T_i$.
 Each $T_i$ defines a function over its leaves $p_i(\vx)$
 which is a noisy estimate of the true underlying 
 density function $p(\vx)$.
 
 \begin{definition}[Density Estimation]
Let $p(\vx)$ be a density function and suppose 
 $F = \cup_i T_i$ is a \acrshort{bmpf} or \acrshort{smpf}.
 Let $l$ denote the leaf in $T_i$ which contains $\vx$
 and whose mass is $P^{(i)}_l$.
The \emph{density estimate} of $\vx$ in $T_i$ is 
$p_i(\vx) = P_l / \vol{l}$ while the density estimate over
the forest is $\hat{p}(\vx) = \frac1n \sum_{i=1}^{|F|}p_i(\vx)$.
 \end{definition}
Rather than using density estimates, we adopt the following simple approach to declare anomalies while 
remaining in probability space; using simply
the $P^{(i)}_l$ rather than $p_i(\vx)$.
This alteration is to prevent a small number of trees from 
corrupting the `score' if they are not good trees.

The simplicity of this approach is one of the strengths of 
our work.
While previous works add an extra scoring mechanism over 
the forest, ours is an inherent property of the underlying
probabilistic framework.
We can threshold exactly in probability space which makes 
these `scores' more interpretable than prior work.
Synthetic density estimation \& anomaly detection examples
are in \Cref{sec: synthetic-examples}.

\begin{definition}[$\eps$-anomaly \& $(\eps,\phi)$-anomaly]
\label{def: anomaly}
Let $F = \cup_i T_i$ be a \acrshort{bmpf} or \acrshort{smpf} and
$\eps, \phi \in [0,1]$.
A point $\vx \in \R^d$ is an \emph{$\eps$-anomaly} in tree $T$ 
if the probability mass of the leaf in which $\vx$ is stored is at most
$\eps$.
A point $\vx \in \R^d$ is an \emph{$(\eps,\phi)$-anomaly}
if $\vx$ is and $\eps$-anomaly in at least $\phi |F|$
trees from $F$.
\end{definition}

\section{Related Work} \label{sec: related-work}
Initiated by the success of the so-called \emph{isolation forest} (\iforest{})
\cite{liu2008isolation},
random forest data summaries have become increasingly
popular.
The \iforest{} algorithm can be roughly stated as:
(i) sample a feature $u$ uniformly at random,
(ii) Along $u$ sample a cut location $c$ uniformly at random \& 
recurse either side of $c$ until the tree has reached maximum height.
Anomalies are then declared based upon their average depth over the forest,
under the expectation that points far from the expected behaviour are easier to identify so are `isolated' more easily in the tree and 
have small average depth.
Cuts exist on the \emph{entire (sub-)domain} over which they are defined.
That is, any cut is continued until it intersects another cut or a boundary,
similar to the Mondrian Process.\footnote{The analogy is not perfect: 
in Mondrian Processes, features to cut are chosen proportional to 
their length.}

However, it was noticed that uniformly sampling features in
\iforest{} could perform suboptimally:
the \rrcf{} rectifies this by
sampling features to cut according to their length \cite{guha2016robust}.
Cuts are restricted to the (sub-)regions of space where data is observed
(just as in the Mondrian Tree)
which enables efficient dynamic changes of the tree as data is added or 
removed.
Given a tree $T$ sampled over data $\mX$, these modifications ensure that the alteration of $T$ to 
$T'$ by adding or removing $\vu$ has the same distribution as sampling 
$T$ on $\mX \cup \{\vu\}$ or $\mX \setminus \vu$, respectively 
(Lemmas 4 and 6 of \cite{guha2016robust}).
The scoring method is related to the expected change in depth of a node 
were a
point (or group of points) not observed; the intuition being that anomalous points cause a
significant change in structure when ignored.
The \rrcf{} also acts as a distance-preserving sketch in the $\ell_1$ norm, suggesting that this data structure is more general than 
its common use-case for anomaly detection.
Interestingly, an extension of Mondrian Forests appears to 
exploit similar properties for estimating the Laplacian Kernel 
\cite{BalLakGhaetal16}.

Finally, the Partial
Identification Forest (\pid{}) is a $k$-ary tree for $k \ge 2$.
In contrast to the previous two approaches, 
the splits are \emph{optimised} 
deterministically over a uniform subsample of the input data to maximise the variance between \emph{sparsity} across subgroups on a feature.
The sparsity of a set of points is roughly the volume of the point set normalised by the volume of the region enclosed by a cut.
It could be problematic to adapt the cuts for removed or new datapoints, so the \pid{} may not be ideal for heterogeneous data streams.

\section{Anomaly Detection Experiments} 
\label{sec: anomaly-detection}
\textbf{Datasets.}
We test on all datasets from the open data repository in 
the Python Outlier Detection library (\acrshort{pyod}) 
(\cite{zhao2019pyod},\cite{Rayana:2016}) \& selected streaming datasets
from the Numenta Anomaly Benchmark repository 
\cite{ahmad2017unsupervised}, \cite{NAB}.
The data are summarised in \Cref{table: data-details}, 
\Cref{sec: anom-appendix}, ranging over
$n \approx 10^2 - O(10^5)$ \& $D \approx 10 - 400$.
A mixture of batch \& streaming 
data are present, as well as data containing continuous \& categorical variables.
The prevalence of anomalies ranges from $0.03\%$ to $36\%$.\footnote{We remark that $36\%$ seems unusually high
for anomaly detection, but 
follow the conventions 
from \cite{zhao2019pyod}}
For stability in the volume computations MinMax feature scaling
into $[0,1]$ was performed for $D \ge 50$. 
Univariate streaming datasets were transformed into 10 dimensions
by applying the common `shingling' technique of combining 10 
consecutive points into one feature vector.
As in \cite{gopalan2019pidforest}, our performance metric
is the \emph{area under curve} (\acrshort{auc}) for the \emph{receiver operating charactersitic} (ROC) curve.

\textbf{Our Approach.}
We sample a forest $F = \cup_i T_i$ of 100 trees on the data $\mX$ of 
batch or streaming \MPT{s} (\acrshort{bmpf} \& \acrshort{smpf}).
Both \acrshort{bmpf} \& \acrshort{mpf} should have 
a lifetime parameter (weighted depth) to govern
the length of the trees but competing methods are more traditional 
$k$-ary trees so we choose $\lambda = \infty$
(as in \cite{lakshminarayanan2014mondrian}) and set a max 
absolute depth of 10 for consistency with \cite{gopalan2019pidforest}.

\textbf{Competing Methods.}
We test against the following random forest algorithms for anomaly 
detection: \emph{Isolation Forest} (\iforest), \emph{Robust Random
Cut Forest} (\rrcf) and \emph{\pid}.
For both \rrcf{} \& \pid{} we utilised opensource implementations
available at \cite{bartos_2019_rrcf} \& \cite{sharan:2019}; 
all other methods are implemented in scikit-learn \cite{scikit-learn}.
For the most meaningful comparison with 
\cite{gopalan2019pidforest}, we adopt exactly their experimental
methodology using default parameters for all scikit-learn methods,
a forest of size 500 with at most 256 points for \rrcf, and 50 trees of depth 10 over a uniform 
sample of 100 points for \pid.
Results for non-random forest approaches 
(e.g. \acrshort{knn},\acrshort{pca}) are in \Cref{tab:non-rf-baseline}, \Cref{sec: anom-appendix}.

\textbf{Performance Summary.}
The batch methods (\acrshort{bmpf}, \iforest{}, \pid{}) all generate 
static data structures.
Although the internal parameters can be incremented on observing data,
the structures do not easily adapt to streaming data.
The two streaming methods (\acrshort{smpf}, \rrcf{}) are adaptive  
structures which can be easily maintained on observing new data.
We compare the batch methods and streaming methods separately: our 
results are summarised in \Cref{tab:mean_ranks} which shows that 
our batch and streaming solutions perform comparably to prior state of
the art.
The full \acrshort{auc} results over the entire \acrshort{pyod}
repository are given in \Cref{tab:pidforest-baseline} which subsumes
the previous benchmark in \cite{gopalan2019pidforest}.
Note that we have not optimised parameter choices for performance,
indicating that the parameter settings for \acrshort{bmpf}, 
\acrshort{smpf} are good defaults - an important feature for anomaly
detection.
An advantage of both \acrshort{bmpf} \& \acrshort{smpf} is that they 
both use the same underlying data structures as \iforest{} \& 
\rrcf{} while adding  additional lightweight probabilistic structure
relying only quantities that can be computed easily from the stored
parameters at every node (e.g volumes).


\begin{table}[htbp]
\caption{Mean Rank and Num. wins for all methods. The batch methods are tested separately from 
the streaming methods (\acrshort{smpf}, \rrcf).}
\label{tab:mean_ranks}
\centering
\begin{tabular}{@{}llll|ll@{}}
\toprule
          & \acrshort{bmpf} & iForest & PidForest & \acrshort{smpf} & RRCF \\ \midrule
Mean Rank & 1.94 & 1.77     & 2.29                & 1.47 & 1.53 \\
Num. Wins & 18   & 24       & 16                  & 32   & 28   \\ \bottomrule
\end{tabular}
\end{table}

\paragraph{Conclusion.}
We have introduced the random forest consisting of \MPT{}s.
These trees have natural interpretations as density estimators of the underlying
distribution of data.
Our approach relates open questions concerning anomaly detection in \cite{lakshminarayanan2016decision} through the lens of density
estimation, thus resolving the open question in \cite{balog2015mondrian}.
Our method enables interpretable anomaly detection as we can threshold in the probability
domain and use masses rather than densities.

In addition, our random forest can be maintained on a dynamic data stream with
insertions and deletions, thus allowing the scalability required for large-data.
In future work, we plan a more in-depth analysis of the performance on data
streams and a rigorous study of the \MPT{} as a density estimator and change-point
detector, rather than simply an anomaly detector.

Finally, there are several directions in which this work could be
extended to allow scalability to higher dimensions by applying random
rotations and/or projections after cuts.
This has the effect of introducing oblique cuts into the space as opposed to
axis-aligned cuts, and could be of further benefit.
Another area for investigation would be to study the effect of \emph{approximate
counting} for the \PTree{} parameters using sketches such as,
for example, the \cm{} sketch.

\newpage

\begin{ack}
  CD is supported by European Research Council grant ERC-2014-CoG 647557.
  We thank Shuai Tang for helpful discussions concerning the experiments and 
  maunscript preparation.
\end{ack}

\section*{Broader Impact}

Important applications of anomaly detection include cybersecurity intrusion detection, operational metrics monitoring, IoT signals (such as detecting broken sensors), and fraud detection. 
Thus, our contribution can have impact across all these domains. 
While there are many applications of varying ethical value that use anomaly detection, such as the possibility for misuse by a surveillance state to "detect anomalous citizen behavior", we believe that by focusing on the addition of interpretability to this solution helps to mitigate the misuses possible and allow better auditing of systems that do make use of anomaly detection \cite{Borradaile2020}.

One application in this domain where there are fairness concerns is regarding the rate of "anomalies" triggered by certain subgroups in fraud detection. 
A poorly calibrated or heuristic measure of anomalous behavior in this setting has the potential to discriminate against subgroups, where the data may be more sparse and thus more likely to appear anomalous.
In this case, additional interpretability of how the model chooses anomalies is extremely important, as it allows the system operator to properly calibrate, using existing probabilistic fairness techniques, to remove or otherwise mitigate discrimination \cite{Davidson2019AFF}.

We present a method that enhances the state of the art for streaming anomaly detection by casting the problem as one of probabilistic density estimation.
Modeling the problem in this way brings the immediate benefit of interpretability in the anomaly space:
typical approaches such as thresholding at say 3 standard deviations away from the mean or median is a standard way of declaring outliers
in applications
but may not be suitable in settings when arbitrary scoring metrics are
proposed
Importantly however, the reframing of this into probability space allows future work to integrate other important socio-technical properties such as privacy and fairness into the same solution, for which there is much research in the field.

Developing accurate, efficient methods for dealing with or summarizing streaming data has the potential to reduce environmental impact significantly, as summarized data is less expensive to send and dealing with data in a localized manner (i.e. on device) removes the need to send data into the cloud for further computation.
This enhancement of downstream analytics also inherently allows for more privacy, by aggregating less raw data together in the cloud. Additional research into how streaming summary methods can be applied in such cases is an exciting area in the preservation of user privacy. Privacy, differential privacy in particular, in the regime of anomaly detection involves a trade-off between knowing enough about a particular data point to determine its anomaly status and the plausible deniability of that data-point. 
Improving the capabilities of private, useful models for anomaly 
detection could be an important area for future work;
for example, integrating existing differential privacy models for kd-trees \cite{cormode2012differentially} with the interpretable
anomaly detectors we have proposed.

\newpage

\bibliographystyle{plain} 
\bibliography{neurips_2020}
\newpage

\appendix
\section{Sampling Mondrian Trees, \PTree{s} and \MPT{s}}

\begin{table}[htbp]
\caption{Lay summary of the various methods explored in 
our work.}
\label{tab:lay-summary}
\centering
\begin{tabular}{@{}ll@{}}
\toprule
Phrase              & Lay Summary           \\ \midrule
\multicolumn{2}{c}{Bayesian Nonparametrics} \\ \midrule
Mondrian Process &
  \begin{tabular}[c]{@{}l@{}}
      A binary tree which partitions sequentially partitions a space by cutting features.\\ 
      Cut dimensions chosen with probability proportional to length.\\ 
      Every node has a time.\\ 
      These trees are parametrised by the lifetime $\lambda > 0$ but may be infinite if\\
      the sum of cutting times repeatedly lies slightly less than $\lambda$
  \end{tabular} \\ \midrule
Mondrian Tree &
  \begin{tabular}[c]{@{}l@{}}
      A binary tree which partitions an input space by cutting features.\\ 
      Cut dimensions are chosen with probability proportional to length.\\ 
      All nodes have times.\\ 
      Tree is guaranteed to be finite.
  \end{tabular} \\ \midrule
P{\'o}lya Tree &
  \begin{tabular}[c]{@{}l@{}}
      A probabilistic structure which takes as input any binary partition and distributes \\ 
      mass throughout the partition.\\ 
      Partition can be finite or infinite
  \end{tabular} \\ \midrule
\multicolumn{2}{c}{Anomaly Detectors}       \\ \midrule
iForest             &    
 \begin{tabular}[c]{@{}l@{}}
      Random forest generated by subsampling the dataset \& building a binary tree. \\
      Feature $u$ sampled uniformly followed by a uniformly sampled cut in $u$.\\
      The scoring mechanism is average depth: anomalous points are easy to isolate \\
      so will have a low average depth compared to normal points.
  \end{tabular} \\ \midrule 
RRCF                & 
 \begin{tabular}[c]{@{}l@{}}
      Random forest generated by subsampling the dataset \& building a binary tree. \\
      Feature $u$ sampled with probability proportional to length.\\
      Cut location uniformly sampled cut in $u$.\\
      The scoring mechanism is ``codisplacement'': the expected change in \\
      the structure of a tree were a group of points not observed.
  \end{tabular} \\ \midrule 
PidForest          &    
 \begin{tabular}[c]{@{}l@{}}
      Random forest generated by subsampling the dataset \& building a $k$-ary tree \\
      for some $k$ to be chosen. \\ 
      Features and cut locations chosen deterministically.\\
      The scoring mechanism is ``sparsity'': roughly the volume of a pointset \\
      divided by the volume of the region enclosing it.
  \end{tabular} \\ \bottomrule 
\end{tabular}
\end{table}

For clarity we describe the structures necessary to introduce our \MPT{s} which 
are summarised in \Cref{tab:lay-summary}.\footnote{
Please note that between paper submission and supplementary submission
we added \Cref{tab:lay-summary} so the table indexing has been incremented
by 1 from the paper originally submitted.}
We will begin with the \textbf{Mondrian Process} which can be succinctly described as:
given an input domain $\gD$ and a lifetime $\lambda > 0$, choose a direction (feature)
to cut with probability proportional to length.
Next, choose a cut location uniformly at random on the selected feature and 
split into two sets less than and greater than the cut location.
This cut procedure has a random cost associated to the ``linear dimension''
(sum of the lengths) of the region at a given node and the process is 
repeated until the lifetime is exhausted by accumulating the random costs.
An implementation is given in \cite{roy2008mondrian}.

The \textbf{Mondrian Tree} builds on the Mondrian Process by building the trees in a more
data-aware fashion.
At a high-level this process is similar to the Mondrian Process except every cut
takes place on a restriction of space to the bounding box on which observations 
are made.
The advantage of this is that cuts are guaranteed to pass through observations
which in high dimensions could result in substantially shortened trees.
Mondrian Trees can also be sampled online which makes them highly efficient.
However, the price to pay for these efficiency gains is that behaviour outside of the 
bounding boxes cannot be modelled.

While the previous two methods are useful for partitioning the data into clusters,
they make no statements about the underlying density of the dataset.
To accommodate this we introduce  the \textbf{\PTree{}} which is a Bayesian nonparametric
model for estimating the underlying density function generating the data.
The \PTree{} model takes as input a binary nested partition of an input space $\gD$,
(represented by a binary tree) and assigns probability to each of the bins (nodes in 
the tree).
Given a point in a bin indexed $A_{b(j)}$, the presence of a point in the bins 
$A_{b(j)1}$ is modelled by a Bernoulli distribution with parameter $p$.
Let $d_j$ denote the depth of $A_{b(j)}$ and 
$V_0, V_1$ denote the volumes of the the bins $A_{b(j)0}, A_{b(j)1}$, 
respectively.
The prior distribution for $p$ is a Beta distribution which has parameters:
\begin{align}
    \alpha_{j0} &= \gamma \left(d_j + 1\right)^2 \frac{V_0}{V_0 + V_1} \\
    \alpha_{j1} &= \gamma \left(d_j + 1\right)^2 \frac{V_1}{V_0 + V_1}
\end{align}
for a hyperparameter $\gamma > 0$ denoting the strength of the prior distribution.
The posterior parameters for the $\alpha_{jk}$ are then incremented by the 
number of points observed in the $A_{b(j)k}$ bin for $k=0,1$.
An implementation is given in \Cref{alg: polya-tree-sampling} which takes 
as input the partition of space $\gD$, thus requiring an extra pass through the tree.
However, for our applications as defined in \Cref{sec: mpf-anomaly}, 
we will be able to implement this in an online fashion.

Our \textbf{\MPT{}} can be implemented in either a batch or streaming fashion.
For a batch computation, we can adapt the Mondrian Process and easily combine this 
with the \PTree{}.
However, for streaming computation, the `empty space' caused by restricting to 
bounding boxes in the Mondrian Tree procedure is highly problematic and this 
motivated our revised construction, the \acrshort{mpt} as described in 
\Cref{sec: mpt-intro}.
We describe this revision in \Cref{alg: mpt} while the parameter 
update algorithms are presented in \Cref{alg: mpt-parameters}.

\textbf{Generating \MPT{s}: Computational Complexity.}
Combining the \PTree{} with either the Mondrian Process or Mondrian Tree 
incurs only a mild overhead in both time and space as all that needs to be stored is an
extra set of parameters.
For the \textbf{batch \MPT{}} (\Cref{sec: mondrian-polya-forest}) this is simply
3 counters per node ($\alpha_{j0},\alpha_{j1}, P_j$).
In \Cref{sec: mpt-intro} we showed that a two-stage split was necessary for the 
\textbf{streaming \MPT{}} and this 
slightly increases the number of parameters to at most 7 per node (see \ref{sec: mpt-params})
which come from the 2 cut parameters, at most 4 restriction parameters, and the mass 
float $P_j$.
Overall, both methods need $O(|\tree|)$ extra space which, nevertheless, is only a
constant factor more space than is required to build the partitioning tree.

The time cost to evaluate these parameters is $O(d | \tree |)$ as computing the volume 
of every node costs $O(d)$.
Since we make the distinction between \textbf{type I/II observation \& complementary
leaves}, volume comparisons are made over nonzero $D$-dimensional hypervolumes.
This permits the following distinctions
at every node to avoid incurring complex volume computations of the complementary regions.

\textbf{Volume Computation for \acrshort{mpt}.}
Recall that for a node $j$ we sample a cut dimension $\cdim_j$ and in that dimension 
a cut location $\xi_j$.
The node $j$ contains the restriction to bounding box $B_j$ which is split into two 
regions $R_{j0}$ and $R_{j1}$ either side of $\xi_j$.
Node $j$ has volume $V_{B_j} = \vol{j}$
and let $h_j$ denote the length of the sampled dimension $\cdim_j$;
the volumes associated 
with $R_{j0}$ and $R_{j1}$ are:
\begin{align}
    V_{R_{j0}} &= \frac{V_{B_j}}{h_j} |\min_{\vx \in j} \vx_{\cdim_j} - \xi_j | \\
    V_{R_{j1}} &= \frac{V_{B_j}}{h_j}|\max_{\vx \in j} \vx_{\cdim_j} - \xi_j |.
\end{align}


We obtain the volume of the observed region when 
 computing $\Restrict{j}$ for the restriction to bounding boxes either side of 
 the cut $\xi_j$ at $j$.
 Recall that 
 $V_{\lChild{j}} = \vol{ B_{\lChild{j}} }$, so the subtraction 
 $V_{R_{j0}} - V_{\lChild{j}} = V_{B_{j0}^C}$ 
 yields the complementary volume necessary for setting the restriction
 \Polya{} parameters $\rho_{\cdot \in}, \rho_{\cdot \neg}$.
All volumes being supported on $D$-dimensional boxes ensures that none of these
quantities trivially collapse to zero.
If one of the feature lengths is zero then we simply treat such a node as a Type I
 observation leaf.


\begin{algorithm}[htb]
\KwIn{Training data $\mX \in \R^{n \times D}$, lifetime $\lambda > 0$ }
\SetKwFunction{FMain}{SampleMondrianTree}
\SetKwFunction{FSub}{SampleMondrianBlock}
\SetKwProg{Fn}{Function}{:}{}
\Fn{\FMain{$\mX, \lambda$}}{
Initialise $\tree = \emptyset, \leaves{\tree} = \emptyset,
\CutDims = \emptyset, \CutLocs = \emptyset, \CutTimes = \emptyset,
N(\Troot) = \{1,2,\dots,n\}.$\\
    \FSub{$\Troot, \mX_{N(\Troot)}, \lambda$}  \\
}
\SetKwProg{Pn}{Function}{:}{\KwRet}
\Pn{\FSub{$j, \mX_{N(j)}, \lambda$}}{
      $\tree \leftarrow \tree \union \{ j \} $ \\
      For all $d \in [D]$ set
      $\vl_{jd}^{\mX} = \min_d \mX_{N(j)},\vu_{jd}^{\mX} = \max_d \mX_{N(j)}$
      to be the dimension-wise minima and maxima of the observations in $j$ \\
      Let $L = \sum_d (u_{jd}^{\mX} - l_{jd}^{\mX})$ denote the
      \emph{linear dimension} of the data in $j$ \\
      Sample $E \sim \ExpDist{L}$ \\
      \uIf{$\tau_{\parent{j}} + E < \lambda$}{
          Set $\tau_j = \tau_{\parent{j}} + E$ \\
          Sample cut dimension $\cdim_j$ with probability proportional to
          $u_{jd}^{\mX} -  l_{jd}^{\mX}$ \\
          Sample cut location uniformly on the interval
          $[ l_{j \cdim_j }^{\mX}, u_{j \cdim_j}^{\mX}]$ \\

          Set $N(\lChild{j}) = \{ n \in N(j) : X_{n \cdim_j} \le \xi_j \}$ and
          $N(\rChild{j}) = \{ n \in N(j) : X_{n \cdim_j} > \xi_j \}$ \\

          \FSub{$\lChild{j}, \mX_{N(\lChild{j})}, \lambda$}\\
          \FSub{$\rChild{j}, \mX_{N(\rChild{j})}, \lambda$}
      }
      \Else{
      $\tau_j \leftarrow \lambda$ and
      $\leaves{\tree} \leftarrow \leaves{\tree} \cup \{j\}$
  }
}
 \caption{Mondrian Forest Sampling \cite{lakshminarayanan2014mondrian}}
 \label{alg: mondrian-forest}
\end{algorithm}

\begin{algorithm}[htb]
\KwIn{Input domain $\gD \subset \R^{D}$,
      a decision tree $T$ which partitions $\gD$,
      hyperparameter $\gamma > 0$}
\KwOut{Probability distribution $\gP = (P_l)_{l \in \leaves{T}}$}
\SetKwFunction{FMain}{Sample\Polya{}Tree}
\SetKwFunction{FSub}{Update\Polya{}Parameters}
\SetKwProg{Fn}{Function}{:}{}
\Fn{\FMain{$\gD, T, \gamma$}}{
$\Troot = \text{root}\left( T \right)$ \\
$P_{\Troot} = 1$ \Comment{Assume all mass is located in the region $\gD$}\\
 \FSub{$\Troot, \gamma$} \\
 }

 \SetKwProg{Pn}{Function}{:}{\KwRet}
 \Pn{\FSub{$j, \gamma$}}{
    \uIf{$j \in \leaves{T}$}{
    $\text{ProbDensity}(j) = P_j / \vol{j}$
    \Comment{$P_j$ was defined at the preceeding level.}
    }
    \uElse{
    $V_j = \vol{j}$ \\
    Let $R_j = [l_1,u_1] \times \dots \times [l_d,u_d]$ define the region in
    as a product of intervals from the minimum in dimension $i$, $l_i$, to
    the maximum in dimension $i$, $u_i$. \\
    $L_j = \sum_{j=1}^d (u_j - l_j)$ is the linear dimension of the region.\\
    $V_0 = \frac{V_j}{u_{\cdim_j} - l_{\cdim_j}} \cdot | l_{\cdim_j} - \xi_j |$
    \Comment{Volume less than cut $\xi_j$}\\
    $V_1 = \frac{V_j}{u_{\cdim_j} - l_{\cdim_j}} \cdot | u_{\cdim_j} - \xi_j |$
    \Comment{Volume greater than cut $\xi_j$}\\
    $n_0 = \text{NumberOfPoints}(\lChild{j}),
    n_1 = \text{NumberOfPoints}(\rChild{j})$
    \Comment{Number of points in children nodes}\\
    $\alpha_0 = \gamma (d+1)^2 \frac{V_0}{V_0 + V_1} + n_0,
    \alpha_1 = \gamma (d+1)^2 \frac{V_1}{V_0 + V_1} + n_1$
    \Comment{Set prior parameters using \PTree~ and then increment using
    Beta-Binomial conjugacy}\\
    $\mu_j = \frac{\alpha_0}{\alpha_0 + \alpha_1}$
    \Comment{$\E(\BetaDist{\alpha_0}{\alpha_1})$}\\
    $P_{\lChild{j}} = \mu_j P_j,  P_{\rChild{j}} = (1 - \mu_j) P_j$\\
    \FSub{$\lChild{j}, \gamma$}, \FSub{$\rChild{j}, \gamma$}
    }
}
 \caption{\PTree{} Sampling. Sets probability (density) for all nodes in the
 given random partition $T$.}
 \label{alg: polya-tree-sampling}
\end{algorithm}

\begin{algorithm}[p]
 \KwIn{Training data $\mX \in \R^{n \times D}$, at least one of lifetime
 $\lambda > 0$ or maximum tree height $m$, \PTree{} hyperparameter $\gamma > 0$}
 \KwOut{A classical Mondrian Tree data structure $T$;
 Partition $\Pi$ over $T$ such that
 $\gP = (P_l)_{l \in \leaves{T}}$
 is a probability distribution over the leaves of $T$ induced by \PTree{} prior}
 \SetKwFunction{FMain}{SampleMondrian\Polya{}Tree}
 \SetKwFunction{FSub}{SampleMondrian\Polya{}Block}
 \SetKwFunction{FSubCut}{SetCutParameters}
 \SetKwFunction{FSubObs}{SetRestrictionParameters}
\SetKwFunction{FRestrict}{Restrict}
\SetKwFunction{FCut}{Cut}
 \SetKwProg{Fn}{Function}{:}{}
 \Fn{\FMain{$\mX, \lambda$}}{
 Initialise $\tree = \emptyset, \leaves{\tree} = \emptyset,
 \CutDims = \emptyset, \CutLocs = \emptyset, \CutTimes = \emptyset,
 N(\Troot) = \{1,2,\dots,n\}.$\\
 $\Troot.\text{ObservedVolume}  = \vol{\bbox{\mX_{N(\Troot)}}}$ \\
     \FSub{$\Troot, \mX_{N(\Troot)}, \lambda$}  \\
 }
 \SetKwProg{Pn}{Function}{:}{\KwRet}
 \Pn{\FSub{$j, \mX_{N(j)}, \lambda$}}{
       $\tree \leftarrow \tree \union \{ j \} $ \\
       $B_j \leftarrow \bbox{ \mX_{N(j)}}, L = \LinDim{B_j}$
       \Comment{\emph{linear dimension} of the bounding box for $j$ }\\
       Sample $E \sim \ExpDist{L}$ \\
       \uIf{$\tau_{\parent{j}} + E < \lambda \text{~and all feature lengths are positive}$}{
           Set $\tau_j = \tau_{\parent{j}} + E$ \\
           \FCut{$j, \mX_{N(j)}, B_j$} \\

           Set $N(\lChild{j}) = \{ n \in N(j) : \mX_{n \cdim_j} \le \xi_j \}$ and
           $N(\rChild{j}) = \{ n \in N(j) : \mX_{n \cdim_j} > \xi_j \}$ \\

           \FRestrict{$j, N(\lChild{j})$} \\
           \FRestrict{$j, N(\rChild{j})$}

           \FSub{$\lChild{j}, \mX_{N(\lChild{j})}, \lambda$}\\
           \FSub{$\rChild{j}, \mX_{N(\rChild{j})}, \lambda$}
       }
       \Else{
            \uIf{any feature length is 0}{
                $\tree \leftarrow \tree \cup \{ j \}$
                \Comment{Bounding box supported on $ < d$ dimensions: Type I Observed leaf}
            }
            \Else{
             \FRestrict{$j, N(j)$}
             \Comment{Restrict once more to generate a \emph{complementary leaf}} \\
            }
       $\tau_j \leftarrow \lambda$ and
       $\leaves{\tree} \leftarrow \leaves{\tree} \cup \{j\}$
       \Comment{Type II Observed leaf (see \Cref{sec: mpt-intro})}
   }
 }

\SetKwProg{Pn}{Function}{:}{\KwRet}
\Pn{\FCut{$j, \mX_{N(j)}, B_j$}}{
    Sample cut dimension $\cdim_j$ with probability proportional to
    $u_{jd}^{\mX} -  l_{jd}^{\mX}$ \\
    Sample cut location $\xi_j$ uniformly on the interval
    $[ l_{j \cdim_j }^{\mX}, u_{j \cdim_j}^{\mX}]$ \\
    ${R_{\TreeLeft}} = \{\vz \in B_j : \vz_{\cdim_j}  \le \xi_j\}$ \\
    ${R_{\TreeRight}} = \{\vz \in B_j : \vz_{\cdim_j}  > \xi_j\}$ \\
    $n_{\TreeLeft} = |\mX_{N(j)} \cap {R_{\TreeLeft}}|$ \\
    $n_{\TreeRight} = |\mX_{N(j)} \cap {R_{\TreeRight}}|$ \\
    $V_{\TreeLeft} = \vol{{R_{\TreeLeft}}},
    V_{\TreeRight} = \vol{{R_{\TreeRight}}}$ \\
    $d_j = \TreeDepth{j}$ \Comment{Absolute depth in Mondrian Tree} \\
    \FSubCut{$2d_j, n_{\TreeLeft}, n_{\TreeRight},V_{\TreeLeft}, V_{\TreeRight}$}
}

\SetKwProg{Pn}{Function}{:}{\KwRet}
\Pn{\FRestrict{$j, \mX_{N(j)}$}}{
    $d = j.\text{depth}$
    \Comment{Get absolute depth in Mondrian tree} \\
    $n_{\textsf{obs}} = |N(j)|$ \Comment{Num. points in node} \\
    $V_p = \parent{j}.\text{ObservedVolume}$
    \Comment{Parent volume} \\
    $V_o = \vol{B(\mX_{N(j)})}$
    \Comment{Observed volume} \\
    $V_c = V_p - V_o$
    \Comment{Complementary volume} \\
    $\rho_0^*, \rho_1^*$ = \FSubObs{$2d+1, n_{\textsf{obs}}, V_o, V_c$}
    \Comment{Set Beta parameters.}\\
    \Return{$\rho_0^*, \rho_1^*$}
}

  \caption{\MPT{} Sampling.  Subroutines:
  \Cref{alg: mpt-parameters}}
  \label{alg: mpt}
 \end{algorithm}
 
 \begin{algorithm}[htb]
 \SetKwFunction{FSubCut}{SetCutParameters}
 \SetKwFunction{FSubObs}{SetRestrictionParameters}
 \SetKwProg{Pn}{Function}{:}{\KwRet}
 \Pn{\FSubCut{$\text{depth}, n_{\TreeLeft}, n_{\TreeRight},
             V_{\TreeLeft}, V_{\TreeRight}$}}{
     $d = \text{depth}$ \\
     $j.\chi_0^* = \gamma (d + 1)^2
     \frac{ V_{\TreeLeft} }{ V_{\TreeLeft} + V_{\TreeRight}} + n_{\TreeLeft},
     j.\chi_1^* = \gamma (d + 1)^2
     \frac{ V_{\TreeRight} }{ V_{\TreeLeft} + V_{\TreeRight}} + n_{\TreeRight}$ \\
 }
 \SetKwProg{Pn}{Function}{:}{\KwRet}
 \Pn{\FSubObs{$\text{depth},\text{NodeSize}, \text{ObservedVolume}, \text{ComplemetaryVolume}$}}{
     $d = \text{depth}; \qquad$ 
     $n = \text{NodeSize}$ \Comment{Number of points in observed bounding box} \\
     $V_{obs} = \text{ObservedVolume}, V_{comp} = \text{ComplemetaryVolume}$ \\
     $\rho_0^* = \gamma (d+1)^2 \frac{V_{obs}}{V_{obs} + V_{comp}} + n$,
     $\rho_1^* = \gamma (d+1)^2 \frac{V_{comp}}{V_{obs} + V_{comp}}$ \\
     \Return{$\rho_0^*, \rho_1^*$}
 }
 \caption{Subroutines for setting Beta Distribution parameters for the
 \MPT.
 Note that the depth parameters in these subroutines refer to depth in \PTree, not absolute
 depth in Mondrian Tree!}
 \label{alg: mpt-parameters}
 \end{algorithm}
\newpage
\section{\acrshort{mpt}: Insertions and Deletions} 
\label{sec: mpt-appendix}

A substantial benefit of the Mondrian Tree construction is that it can 
be built online as new data is seen.
The key idea underpinning this is
\emph{projectivity} (\Cref{lem:projectivity} \Cref{sec: preliminary}), 
which asserts that if a Mondrian tree 
$T \sim \MTree{\mX}{\lambda}$ is sampled and a new point $\vz$ is 
observed, then inserting $\vz$ into $T$ to generate $T'$ yields
$T' \sim \MTree{\mX \cup \vz}{\lambda}$ \cite{lakshminarayanan2014mondrian}; 
moreover, this process is efficient.
This is where the restriction of a cut $\xi_j$ to the 
bounding box $B_j$ is critical, because it permits the sequential
addition of $\vz$ into $T$ while preserving the distribution over which 
$T$ was sampled had $\vz$ been seen prior to sampling $T$!
We adapt the online update procedures from Mondrian Trees
to streaming \MPT{s} by invoking projectivity and then recognising that 
the necessary parameters can be easily incremented as the 
data is observed.
Inserting a point $\vz$ into tree $T$ is denoted 
$T' \sim \MPTreeInsert{T}{\vz}$.

However, for data streams we also need the capability to delete from the tree;
this is where the link with the \rrcf{} work becomes 
necessary, as we can adapt their deletion mechanism for the 
\MPT{} setting.
Our alteration is necessary for the Mondrian Tree setting as
nodes have an associated time which cannot
exceed the lifetime budget $\lambda$ and deleting a point on the bounding box
can affect the times of all nodes in the subtree rooted at that node.
In this setting, the point to delete, $\vz$, is chosen
ahead of time, hence the algorithm is deterministic which is why we will
write $T' = \MPTreeDelete{T}{\vz}$ (in contrast to 
$T' \sim \MPTreeInsert{T}{\vz}$) for deleting $\vz$ from $T$.
The following lemmas summarise the insertion and deletion 
procedures from \cite{lakshminarayanan2014mondrian} and \cite{guha2016robust}
to account for the additional \PTree{} parameters that we need
when using the \MPT.
The insertions procedure is described in \Cref{alg: mpt-insertion}, 
and \Cref{alg: mpt-deletion} illustrates the deletion mechanism.


\begin{lemma}[Insertions] \label{lem:mpt-insertion}
Let $T \sim \MondPolyaTree{\mX}{\lambda}$ be a \MPT sampled over 
data $\mX$ with lifetime $\lambda > 0$.
If $\vz$ is a new observation and $T' \sim \MPTreeInsert{T}{\vz}$ then 
$T' \sim \MondPolyaTree{\mX \cup \vz}{\lambda}$.
\end{lemma}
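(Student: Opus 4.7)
The plan is to reduce the statement to two separable claims: (i) the underlying Mondrian Tree component of $T'$ has the law $\MTree{\mX \cup \vz}{\lambda}$, and (ii) the additional \PTree{} parameters $(\chi_{j0}, \chi_{j1}, \rho_{jk\in}, \rho_{jk\neg}, P_j)$ attached to $T'$ match those that a fresh draw from $\MondPolyaTree{\mX \cup \vz}{\lambda}$ would produce. Claim (i) is immediate: the Mondrian component of \Cref{alg: mpt-insertion} is simply the extension operator $\textsf{MTx}$ of \cite{lakshminarayanan2014mondrian} with additional parameter-updating side effects, so \Cref{lem:projectivity} applies verbatim.

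For (ii), I would observe that the \PTree{} parameters in \Cref{eq: mpt-prior} are deterministic functions of the tree topology, cut dimensions, cut locations, and the bounding-box volumes $V_{R_\cdot}, V_{B_\cdot}, V_{B_\cdot^C}$, while the posterior in \Cref{eq: mpt-posterior} only depends on per-node data counts via Beta-Binomial conjugacy. Hence it suffices to check that each update path inside the insertion algorithm either (a) leaves the bounding box at a node unchanged and increments the right count by one, or (b) creates a new internal node whose prior parameters are computed freshly from the newly formed regions.

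The argument then proceeds by case analysis on how $\vz$ interacts with the current tree, mirroring the three cases in the Mondrian insertion rule. In the first case, $\vz$ is routed through existing cuts without altering any bounding box; only the counts $n_0, n_1$ on the root-to-leaf path increment, matching \Cref{eq: mpt-posterior}. In the second case, $\vz$ lies outside a current bounding box $B_j$ and a new internal node $j^\ast$ is introduced above $j$: the existing subtree rooted at $j$ keeps its unchanged bounding box (so its $\chi, \rho$ parameters are unchanged), while $j^\ast$ and the freshly created complementary sibling get their priors from \Cref{eq: mpt-prior} using the enlarged region, with counts $n_0, n_1$ reflecting $\{\vz\}$ versus the previous data. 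In the third case, $\vz$ lands in a leaf whose bounding box must expand; here the leaf's restriction parameters are simply recomputed from the enlarged observed volume and the parent's complementary volume, and the count increments.

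The main obstacle is the second case, since introducing $j^\ast$ effectively grafts a subtree of the from-scratch sample onto a previously existing subtree of the old sample, and one must argue that these two halves are jointly distributed as $\MondPolyaTree{\mX \cup \vz}{\lambda}$. This is where projectivity does the heavy lifting: the retained subtree is conditionally distributed as $\MTree{\mX \cap B_j}{\lambda}$ by \Cref{lem:projectivity}, the newly sampled pieces are drawn under the same generative rules used in $\MondPolyaTree{\cdot}{\lambda}$, and the \PTree{} parameters on both sides are deterministic functions of the resulting tree and its volumes, so they agree in distribution with a from-scratch draw. Combining the three cases with the conjugate count updates yields the desired distributional equality.
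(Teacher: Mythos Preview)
Your proposal is correct and follows essentially the same approach as the paper: both arguments reduce to (i) projectivity of the underlying Mondrian Tree via \Cref{lem:projectivity}, and (ii) the observation that the \PTree{} parameters are deterministic functions of the resulting tree structure plus the conjugate count increments. The paper's proof is a terse two-sentence sketch of exactly this decomposition, whereas your case analysis of the three insertion scenarios supplies the detail the paper omits; in particular, your handling of the ``new internal node above $j$'' case makes explicit why the retained subtree's parameters are unaffected, which the paper leaves implicit.
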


\begin{proof} 
The tree that we sample and store is exactly a Mondrian Tree, hence we 
invoke projectivity so that $T'$ is a valid Mondrian Tree over 
$\mX \cup \vz$.
Since the Mondrian Tree $T$ implicitly but uniquely defines a \MPT{}
which partitions the input space, projectivity also applies to the 
\MPT{} structure as a random partition.
Additionally, we need to alter the (cut and restrict) Beta parameters for
every node which are affected by the insertion of $\vz$ in tree $T$. 
However, this amounts to simply incrementing counts over the subtree: 
updating the parameters is sufficient as we only need the expected value
of every Beta distribution.

\end{proof}

\begin{lemma}[Deletions] \label{lem:mpt-deletion}
Let $T \sim \MondPolyaTree{\mX}{\lambda}$ and 
let $\vz$ be the point to be removed from $\mX$ and $T$.
If $T' = \MPTreeDelete{T}{\vz}$ then 
$T' \sim \MondPolyaTree{\mX \setminus \vz}{\lambda}$.
\end{lemma}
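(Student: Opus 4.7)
The plan is to decouple the argument into two components: first, verify that the structural Mondrian tree obtained after running $\MPTreeDelete{T}{\vz}$ is distributed as a fresh sample from $\MTree{\mX \setminus \vz}{\lambda}$; second, verify that the cut parameters $\chi_{\cdot}$ and restriction parameters $\rho_{\cdot}$ on every surviving node coincide with those that would have been assigned had the \acrshort{mpt} been constructed from scratch on $\mX \setminus \vz$. Combining these two facts identifies $T'$ with a sample from $\MondPolyaTree{\mX \setminus \vz}{\lambda}$, since in our construction the underlying Mondrian tree uniquely determines the induced \acrshort{mpt}.

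For the structural claim, I would adapt the deletion argument of \cite{guha2016robust} (Lemma 6) to the Mondrian setting. Let $l^*$ denote the leaf currently storing $\vz$. If $l^*$ contains other points, only its bounding box and those of its ancestors may shrink when $\vz$ is removed; if $N(l^*) = \{\vz\}$, then $l^*$ is pruned and $\sib{l^*}$ is promoted into $\parent{l^*}$, exactly as in \rrcf. At every ancestor $j$ whose bounding box strictly shrinks, I would inspect the stored cut $\xi_j$ in dimension $\cdim_j$: if $\xi_j$ still separates the remaining data we keep it, otherwise we contract that cut in the same manner as the \rrcf{} collapse. The extra work relative to \rrcf{} is to verify that the stored cut times $\CutTimes$ remain a valid draw under the shrunken bounding boxes, and that the \PTree{} counters can be transported through this process without introducing bias.

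For the parameter claim, I would rely on Beta--Binomial conjugacy together with the posterior update rule in Equation~\ref{eq: mpt-posterior}. At any surviving node whose bounding box is unchanged, the prior parameters depend only on volumes $V_{R_{j0}}, V_{R_{j1}}, V_{B_{\Child{j}}}, V_{B_{\Child{j}}^C}$, all of which coincide with their values on $\mX \setminus \vz$; the posterior then differs from the from-scratch value only in the counts along the root-to-leaf path to $\vz$, and decrementing each by one exactly undoes the insertion update of \Cref{lem:mpt-insertion}. For nodes whose bounding box shrinks or whose cut is contracted, the structural step has already rewritten the cut, so the prior volumes are recomputed from the new bounding boxes using the formulas in \Cref{sec: mpt-appendix}, once again matching the from-scratch construction.

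The main obstacle is the time-consistency check: ensuring that the stored sequence $\CutTimes$ remains a valid draw from the Mondrian generative process once an ancestor's bounding box shrinks. The cleanest route is the Poisson-process representation of Mondrian cuts: at every node the waiting time $E \sim \ExpDist{L_j}$ can be viewed as the first arrival of a Poisson process of candidate cuts on the bounding box with rate proportional to the coordinate lengths. Restricting the bounding box after $\vz$ is removed restricts the underlying Poisson process, which by the splitting property is again a Poisson process with the appropriately reduced rate; conditioning on the restricted process having produced the retained cut therefore yields the correct conditional distribution on $(\cdim_j, \xi_j, \tau_j)$. Once this time-consistency argument is in place, all remaining steps (conjugacy updates, bookkeeping of complementary volumes for the restriction parameters, and pruning of emptied leaves) are routine.
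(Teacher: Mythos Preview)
Your two-part decomposition (Mondrian-tree structure, then \PTree{} parameters) and the conjugacy-based handling of the $\chi$ and $\rho$ counters match the paper. The substantive divergence is in how the cut times $\CutTimes$ are reconciled when a bounding box shrinks.

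The paper does not argue via Poisson splitting. At each node $j$ on the root-to-$\vz$ path where $\vz$ is a boundary point of $B_j$, the linear dimension drops from $L_j$ to $L_j' < L_j$, and the paper \emph{deterministically rescales} the stored time by an inverse-CDF transform: with $F$ the CDF of $\ExpDist{L_j}$ and $G$ that of $\ExpDist{L_j'}$, it sets $\tau_j' = G^{-1}(F(\tau_j))$, which amounts to $\tau_j' = (L_j/L_j')\,\tau_j$. Because $\tau_j' > \tau_j$, one must then test $\tau_j' < \lambda$; if the lifetime budget is exceeded, the subtree rooted at $j$ is contracted into $\parent{j}$. This $\lambda$-check against the rescaled time is the crux of the paper's deletion mechanism and does not appear in your outline.

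Your Poisson-splitting route is a genuinely different (and potentially cleaner) coupling, but the key sentence is not correct as written. Conditional on the first cut in $B_j$ landing in $B_j'$, the cut time is still $\ExpDist{L_j}$-distributed, not $\ExpDist{L_j'}$: the minimum of independent exponentials has the summed rate irrespective of which component achieves it. What actually makes your approach valid is a \emph{marginal} argument: the output time equals the first arrival $T_1$ of the $B_j'$-subprocess in both the retained case (where $\tau_j = T_1$ is kept) and the contracted case (where the promoted child's stored time is $T_1$ under the same coupling), and $T_1 \sim \ExpDist{L_j'}$ unconditionally. To make the proof go through you would need to replace the conditional-distribution claim with this marginal one and verify that it propagates correctly through the recursion down the path.
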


\begin{proof}

First, locate the deepest node $j$ containing $\vz$, there are two cases:
(i) $\vz$ is \emph{internal} to the box $B_j$
(ii) $\vz$ is a \emph{boundary point} defining part of the bounding box
$B_j$ (i.e. it is maximal or minimal at $j$ in one dimension).
If $\vz$ is internal to $B_j$ then we are free to simply remove it from 
$j$ and decrement the necessary counts.
Otherwise, deleting $\vz$ causes a change to the bounding box: let 
$B_j'$ denote the new bounding box for $j$ under the removal of $\vz$.
Now, it must be the case that $L_j' = \LinDim{B_j'}$ is \emph{at most}
$L_j = \LinDim{B_j}$.
However, if there is a $\vu \ne \vz$ 
in $j$ but is equal to $\vz$ in \emph{all} dimensions on which $\vz$ lies 
on the boundary, then we could treat $\vz$ as an internal point and 
remove then decrement.
So assume $\vz$ uniquely defines $B_j$ in the required dimensions, hence
$L_j' < L_j$ so the exponential distribution used to generate the 
node time $\tau_j$ is different under the absence of $\vz$.
Let $F(t) = \cdf{\ExpDist{L_j}}(t)$ and $G(t) = \cdf{\ExpDist{L_j'}}(t)$ be the
CDF functions of the exponential distributions $\ExpDist{L_j}$ and 
$\ExpDist{L_j'}$, respectively as functions of time $t$.
The mass associated to time $\tau_j$ is $\psi = F(\tau_j)$ hence, the time 
with the same mass in $G(t)$ is
$\tau_j' = G^{-1}(\psi)$ (these are straightforward for the exponential
distribution since $\cdf{\ExpDist{\zeta}}(t) = 1 - \exp(-\zeta t)$).
Finally, since $L' < L$, we must have $\tau_j' > \tau_j$ so the time 
has increased, meaning we must check whether $\tau_j' < \lambda$.
If so, then keep $j$, else contract $j$ and its descendants into $\parent{j}$.
This approach must be done for every node on the path from $\Troot$ to 
 $j$ which contains $\vz$ so in the worst case is 
 $O(d \cdot \TreeDepth{T})$.
 Finally, it remains to decrement all necessary counts which were affected 
 by the presence of $\vz$ on the path from $\Troot$ to $j$ 
 (or the contracted ancestor of $j$).
\end{proof}

 \textbf{Complexity: Insertions \& Deletions}
 Both procedures are efficient and are dominated by the time it takes to locate
the locate the node which stores query point and requires checking inclusion in a
bounding box at $O(D)$ cost a maximum of $\TreeDepth{T}$ times, hence  $O(D \TreeDepth{T})$ overall.
 Note that this is the absolute depth measured in the Mondrian
 Tree sense, not the adjusted depth to account for the \PTree{}
 construction as defined prior to \Cref{eq: mpt-prior},
 nor the lifetime $\lambda$ which could potentially be large.
 Since we only store the Mondrian Tree which generates the \MPT{}
 which, in expectation, should be balanced and hence 
 $\TreeDepth{T} = \Theta(\log n)$.
 In the random forest literature (\cite{liu2008isolation}, 
 \cite{guha2016robust}, \cite{gopalan2019pidforest}), the depth is 
 typically a parameter of small magnitude relative to 
 the size of input data, usually 10.
 Hence, the presence of the maximum tree depth term in the above
 time complexity bounds is not problematic.

\begin{algorithm}[h]
 \KwIn{Mondrian Tree $T = (\tree, \CutDims, \CutLocs, \CutTimes)$}
 \KwOut{Mondrian Tree $T$ sampled over $\mX \cup \vz$}
\SetKwFunction{FMain}{$\textsf{MT}_+$}
\SetKwFunction{FSub}{MTx}
\SetKwProg{Fn}{Function}{:}{}
\Fn{\FMain{$T, \mX, \lambda, \vz$}}{
$\Troot = \textsf{root}(T)$ \\ 
 \FSub{$T, \mX, \lambda, \vz, \Troot$} \\
 }
 \SetKwProg{Pn}{Function}{:}{\KwRet}
 \Pn{\FSub{$T, \mX, \lambda, \vz, j$}}{
    $\ve^l = \max(l_j^{\mX} - \vz,0), \ve^u = \max(\vz - u_j^{\mX},0)$ 
    \Comment{$\ve^l, \ve^u  = \vzero_d$ iff $z \in B_j$}\\
    Increment the \emph{observed restriction} parameter $\rho_0$ by 1 \\ 
    Sample $E \sim \ExpDist{ \sum_{i=1}^d \left(\ve^u  + \ve^l\right)_i}$ \\ 
    \uIf{$\tau_{\parent{j}} + E< \tau_j$}{
        Sample $\delta$ with probability proportional to $\ve^u  + \ve^l$ \\
        Sample a cut $\chi \sim \textsf{Uniform}\left(a,b\right)$ with 
        $a = u_{j\delta}^{\mX}, b = \vz_{\delta}$ if $\vz_{\delta} > u_{j\delta}$,
        else $a = \vz_{\delta}, b = l_{j\delta}^{\mX}$ \\
        Insert $j'$ ($j$ but below $\parent{j}$) to $j$ with: $N(j') = N(j) \cup \{\vz\}$,
        $\cdim_{j'} = \delta, \xi_{j'} = \chi, \tau_{j'} = \tau_{\parent{j}} + E,
        l_{j'}^{\mX} =  \min (l_j^{\mX}, \vz),
        u_{j'}^{\mX} = \max (u_j^{\mX}, \vz)$ \\
        Insert sibling $\sib{j}$ containing $\vz$ such that 
        $\lChild{j'} = j, \rChild{j'} = \sib{j}$ if $\vz_{\delta_j} > \xi_j$ or 
         $\rChild{j'} = j, \lChild{j'} = \sib{j}$, otherwise \\ 
        Set the Beta parameters according to number of points either side of $\xi_j$
    }
    \uElse{
        Update $l_{j}^{\mX} \leftarrow  \min (l_j^{\mX}, \vz),
        u_{j'}^{\mX} \leftarrow \max (u_j^{\mX}, \vz)$ \\ 
        \uIf{$j \in \leaves{\tree}$}{
            \Return{}
        }
        \uElse{
            \uIf{$\vz_{\cdim_j} < \xi_j$}{
                $\Child{j} = \lChild{j}$
            }
            \uElse{
                $\Child{j} = \rChild{j}$
            }
            Increment $\chi_{\Child{j}}$ by 1 \\ 
            \FSub{$T, \mX, \lambda, \vz, \Child{j}$}
        }
    }
    }
 \caption{ \MPT{} Insertion: $\MPTreeInsert{T}{\vz}$ }
 \label{alg: mpt-insertion}
\end{algorithm}
 
\begin{algorithm}[!htb]
 \KwIn{Mondrian Tree $T = (\tree, \CutDims, \CutLocs, \CutTimes)$}
 \KwOut{Mondrian Tree $T$ sampled over $\mX \setminus \vz$}
\SetKwFunction{FMain}{$\textsf{MT}_-$}
\SetKwFunction{FSub}{MTd}
\SetKwProg{Fn}{Function}{:}{}
\Fn{\FMain{$T, \mX, \lambda, \vz$}}{
$\Troot = \textsf{root}(T), \TreePath = \{ \Troot \} $\\
 \FSub{$T, \mX, \lambda, \vz, \Troot$} \\
 }
 \SetKwProg{Pn}{Function}{:}{\KwRet}
 \Pn{\FSub{$T, \mX, \lambda, \vz, j, \TreePath$}}{
 Find the deepest node $j$ containing $\vz$ \\ 
 Let $\TreePath = \{\Troot, u_1, u_2, \dots, u_k\}$ be the set of nodes from $\Troot$ to $j$ \\ 
 \For{$j \in \TreePath$}{
    Check if $\vz$ is \emph{internal} to the bounding box, or a point which defines
    the bounding box in one of dimensions $i \in \{1,2,\dots,d\}$ \\ 
    \uIf{$\vz$ is \emph{internal}}{
        $\Child{j} = \lChild{j}$ iff $\vz_{\delta_j} \le \xi_{j}$
        Decrement the cut and restriction counter corresponding to $\vz$ at $v$ by 1 \\
        Decrement all counts by 1 at the subtree rooted at $\Child{j}$ 
        \Comment{This only decrements on the side of the cut that $\vz$ should go.}\\
        \Return
    }
    \uElse{
        Let $B_j'$ be the bounding box at $j$ with $\vz$ ignored which has linear 
        dimension $L' = \LinDim{B_j'}$ \\
        Evaluate new node time $\tau_j'$ through inverse CDF \\
        \uIf{$\tau_j' \ge \lambda$}{
            Contract the entire subtree rooted at $j$ into $j$\\
            Set $\tau_j' = \lambda$ \\ 
            \Return
        }
    }
 }
 \Return{$T$}
    }
 \caption{Inplace deletions for the \MPT{}, $\MPTreeDelete{T}{\vz}$.}
 \label{alg: mpt-deletion}
\end{algorithm}

\newpage
\newpage

\section{Calculations for \Cref{fig:mpt-example-tree}} \label{sec:mpt-example-calcs}

Let us consider the generative process for sampling a 
\emph{streaming \MPT} (\acrshort{mpt}) to clarify the interplay between the underlying
Mondrian and \PTree{s}.
Let $\mX = \{\vx_1,\vx_2,\vx_3,\vx_4\}$ with $\vx_1 = (0,0), \vx_2=(1/4,1/4),
\vx_3=(2/5,4/5),\vx_4=(1,1)$.
Set $\gD = \bbox{\mX}$ to be the bounding box of the region containing $\mX$ and denote 
the two directions which span $\R^2$ be $x$ and $y$.
We show how sampling a depth 2 \emph{Mondrian Tree}
encodes a depth 4 \acrshort{mpt} which can be used to estimate the 
density over $\gD$.
Note that the only tree we store is the Mondrian Tree (with
lifetime $\lambda = \infty$), albeit with the extra parameters necessary
for \PTree{} density estimation.
Recall that the root of the tree is the node $\Troot$ which has an
empty index bitstring $b(\Troot) = \emptyset$ so it can be ignored
from node/parameter index strings.
The following example is illustrated in \Cref{fig:mpt-example-tree}.

Suppose the prior strength hyperparameter is $\gamma=1$
so it can be ignored from the \PTree{} calculations.
The first cut, $\xi_{\Troot}$ occurs at $x = 0.5$ and traverses the
entire bounding box $\gD$ in direction $x$.
This splits $\gD$ into two regions 
$R_{ 0 } \supset \{\vx_1,\vx_2,\vx_3\}$
and 
$R_{ 1} \supset \vx_4$,
each with volume $V_0,V_1 = 0.5$.
Hence, the posterior cut parameters are $\chi_0^* = 7/2$ \& $\chi_1^* = 3/2$
which results in $\mu_{\chi_{\Troot}} = 7/10$.

Next, call $\Restrict{\Troot}$  which
computes $\Restrict{R_{0}}$
\& $\Restrict{R_{ 1}}$ (see \Cref{alg: mpt}).
Since $\vx_4$ is isolated in $R_{ 1}$, the bounding box containing $\vx_4$ is supported on only 1 dimension;
$\Restrict{R_{ 1}} = (R_{1}, \emptyset)$ and the node 
storing $\vx_4$ is a \emph{Type I observation leaf} with volume $1/2$.
Hence, the \emph{mass} associated to this leaf is 
$P_{ 1} =  3/10$ and dividing
out the volume of the leaf yields the \emph{density} as $6/10$.

Let $B_{ 0} = \bbox{R_{ 0}}$ be the bounding box
containing the points $\vx_1,\vx_2,\vx_3$ from the region $R_{ 0}$.
Then, $\Restrict{R_{ 0}} = (B_{ 0}, B_{ 0}^C)$ which have volumes: 
$V_{B_{0}}= 32/100$ and $ V_{B_{ 0}^C} = 18/100$. 
The \text{\Polya} depth of this node is now 1 
(use restriction parameters \eqref{eq: mpt-prior} with $d_{\Troot} = 0$)
so the
(posterior) parameters for the split at this level are, for inclusion in $B_{0}$
(encoded with a $\in$) and exclusion (encoded with a $\neg$), respectively:
\begin{equation}
    \rho_{0\in}^* = 2^2 \cdot \nicefrac{32/100}{50/100} + 3 
    \qquad
    \text{and}
    \qquad
     \rho_{0\neg}^* = 2^2 \cdot \nicefrac{18/100}{50/100}.
\end{equation}
Accordingly, we obtain $\mu_{\rho_{0}} = 36/175$, `generate' an
\emph{internal node} with bitstring $0\in$ and a 
\emph{complementary leaf} with bitstring $0 \neg$.
Note that neither of these nodes is ever materialised as they 
are wholly defined by the node with index $b(j) = 0$ in the 
Mondrian Tree.
The masses allocated are $\mu_{\chi_0} \mu_{\rho_{0}}$ 
for the node $0 \in$ \&  $\mu_{\chi_0} (1 - \mu_{\rho_{0}})$
for the node $0 \neg$.

Since we have fixed a maximum depth of 2 for the Mondrian 
Tree, we perform one subsequent cut, $\xi_{ 0}$, to 
separate $\{\vx_1,\vx_2\}$ from $\{\vx_3\}$, and perform a final 
restriction procedure.
Hence, we have used the Mondrian Tree to correctly define
a \PTree{} over the partition of $\gD$.
Further details and calculations
can be found in \Cref{sec:mpt-example-calcs}.

Next, we deal with the points $\vx_1,\vx_2,\vx_3$ and the region $R_0$
generated left of the cut $\xi_{\Troot}$ by performing the first $\Restrict{\Troot}$ step.
Note that $\Restrict{\Troot}$ separately computes the restriction to 
bounding boxes either side of $\xi_{\Troot}$ noting that 
$\Restrict{R_0} = (B_0, B_0^C)$ and $\Restrict{R_1} = (R_1, \emptyset$).
Recall that $R_1$ contains a bounding box supported only on one dimension
so we set this to be a Type I observation leaf so the restriction simply
returns the set $R_1$.
On the other hand, consider $R_{\vx_1,\vx_2,\vx_3}$ which has a
volume of 1/2 and is decomposed into the subregions $B_0$ and
$R_{0} \setminus  B_0 = B_0^C$ (here the $B_0^C$
notation denotes the set complement in the universe $R_0$).
We thus obtain $V_{0\in} = \vol{B_0} = 32/100$ and 
$V_{0\neg} = 18/100$.
The \text{\Polya} depth of this node is now 1 so the
(posterior) parameters for the split at this level are, for inclusion in $B_0$
(encoded with a $\in$) and exclusion (encoded with a $\neg$), respectively:
\begin{align*}
    \rho_{0\in}^* &= (1+1)^2 \cdot \frac{32/100}{50/100} + 3 \\
    \rho_{0\neg}^* &= (1+1)^2 \cdot \frac{18/100}{50/100}.
\end{align*}
Overall, this results in $\mu_{\rho_{0}} = 36/175$, the internal node 
whose bitstring is $0\in$ and a \emph{complementary leaf} encoded by 
$0 \neg$.
The mass assigned to each of these nodes is 
$\mu_{\chi_0} \mu_{\rho_{0}}$ and  
$\mu_{\chi_0} (1 - \mu_{\rho_{0}})$, respectively.

Following this restriction, we complete one more cut: $\xi_{0\in}$
at $y=0.4$ defined only on the box $B_0$ and generate
the two regions $R_{00}, R_{01}$.
Since $|R_{01} \cap \mX| = 1$, we terminate the process and treat this leaf as an observed leaf of type I with mass 
$\mu_{\chi_0} \mu_{\rho_{0}} (1 - \mu_{\chi_{0\in}})$.
On the other hand, $|R_{00} \cap \mX| = 2$ so we again perform
$\Restrict{R_{0\in0}} = (B_{00}, B_{00}^C)$
which returns the bounding box $B_{00} = \bbox{ {\vx_1, \vx_2} }$
in an observation leaf of type II, along with its complementary region
which is added to the set of complementary leaves.
At this point we terminate the process, so there are 5 leaves generated which partition the entire input domain as defined by the input data.

\paragraph{Numerics.}
Given data $\mX$ and the cuts $\xi_{\Troot}, \xi_{0}$ the following 
quantities are used to evaluate the density in each of the 5 leaves:
\begin{itemize}
    \item{Cut at root node \Polya{} depth = 0: 
    $\chi_0^* = \frac{1/2}{1/2}+3$ and 
    $\chi_0^* = \frac{1/2}{1/2}+1$ so that 
    $\mu_{\chi_{\Troot}} = 7/10$ and nodes $0,1$ are created.
    They are internal and observation leaf Type I, respectively.}
    
    \item{Restrict at node $0$, \Polya{} depth = 1:
    $ \rho_{0\in}^* = (1+1)^2 \cdot \frac{32/100}{50/100} + 3$,
    $ \rho_{0\neg}^* = (1+1)^2 \cdot \frac{18/100}{50/100}$ so that 
    $ \mu_{\rho_{0}} = 36/175$.
    Internal node $0\in$ and complementary leaf $0\neg$ are created.}
    
    \item{Cut at node $0\in$, \Polya{} depth = 2:
    $\chi_{0\in0}^* = 9 \frac{16}{32} + 2 = 13/2$,
    $\chi_{0\in1}^* = 9 \frac{16}{32} + 1 = 11/2$, so that 
    $\mu_{\chi_{0 \in}} = 13 / 24$.
    Internal node $0\in 0 $ and $0\in 1$ are created, however, $0\in 1$
    has exactly one datapoint in so is a Type I observation leaf.
    }
    
    \item{Restrict at node $0\in$ with \Polya{} depth=3:
    $\rho_{0\in0\in} = 4^2 \frac{1/16}{16/100} + 2$,
    $\rho_{0\in0\neg} = 4^2 \frac{16/100 - 1/16}{16/100}$ so that 
    $\mu_{\rho_{0 \in}} = 11/24$ to get the Type II observation leaf
    containing $B(\vx_1, \vx_2)$ and the complementary leaf.}
\end{itemize}

\section{Further Details: \Cref{sec: anomaly-detection}} 
\label{sec: anom-appendix}

The dataset details are given in \Cref{table: data-details}. 
We then present the numeric results corresponding to \Cref{tab:mean_ranks} in 
\Cref{tab:pidforest-baseline} and a discussion in the subsequent section.
We also briefly present some results on the running time as well as an
initial statistical analysis.

\begin{table}[htb]
\centering
\begin{tabular}{lllll}
\hline
Dataset            & $n$    & $d$          & Number of Anomalies & \% Anomalies \\ \hline
\multicolumn{5}{c}{PidForest Baseline Comparision: PyOD}                        \\ \hline
Thyroid            & 3772   & 6            & 93                  & 2.5          \\
Mammography        & 11183  & 6            & 260                 & 2.32         \\
Seismic            & 2584   & 11           & 170                 & 6.5          \\
Satimage-2         & 5803   & 36           & 71                  & 1.2          \\
Vowels             & 1456   & 12           & 50                  & 3.4          \\
Musk               & 3062   & 166          & 97                  & 3.2          \\
HTTP (KDDCUP99)    & 567479 & 3            & 2211                & 0.4          \\
SMTP (KDDCUP99)    & 95156  & 3            & 30                  & 0.03         \\ \hline
\multicolumn{5}{c}{PidForest Baseline Comparision: NAB}                         \\ \hline
A.T                & 7258   & 10 (Shingle) & 726                 & 10.0         \\
CPU                & 18041  & 10 (Shingle) & 1499                & 8.3          \\
M.T                & 22686  & 10 (Shingle) & 2268                & 10.0         \\
NYC                & 10311  & 10 (Shingle) & 1035                & 10.0         \\ \hline
\multicolumn{5}{c}{All other PyOD Datasets}                                     \\ \hline
Annthyroid         & 7200   & 6            & 534                 & 7.42         \\
Arrhythmia         & 452    & 274          & 66                  & 15           \\
BreastW            & 683    & 9            & 239                 & 35           \\
Cardio             & 1831   & 21           & 176                 & 9.6          \\
Ecoli              & 336    & 7            & 9                   & 2.6          \\
ForestCover        & 286048 & 10           & 2747                & 0.9          \\
Glass              & 214    & 9            & 9                   & 4.2          \\
Heart              & 349    & 44           & 95                  & 27.7         \\
Ionosphere         & 351    & 33           & 126                 & 36           \\
Letter Recognition & 1600   & 32           & 100                 & 6.25         \\
Lympho             & 148    & 18           & 6                   & 4.1          \\
Mnist              & 7603   & 100          & 700                 & 9.2          \\
Mulcross           & 262144 & 4            & 26214               & 10           \\
Optdigits          & 5216   & 64           & 150                 & 3            \\
Pendigits          & 6870   & 16           & 156                 & 2.27         \\
Pima               & 768    & 8            & 268                 & 35           \\
Satellite          & 6435   & 36           & 2036                & 32           \\
Shuttle            & 49097  & 9            & 3511                & 7            \\
Speech             & 3686   & 400          & 61                  & 1.65         \\
Vertebral          & 240    & 6            & 30                  & 12.5         \\
WBC                & 278    & 30           & 21                  & 5.6          \\
Wine               & 129    & 13           & 10                  & 7.7          \\
Yeast              & 1364   & 8            & 64                  & 4.7          \\ \hline
\multicolumn{5}{c}{Other NAB Datasets}                                          \\ \hline
ad\_exchange       & 1634   & 10 (Shingle)           & 166                 & 10           \\
aws\_cloud\_cpu    & 4023   & 10 (Shingle)           & 402                 & 10           \\
google\_tweets     & 15833  & 10 (Shingle)           & 1432                & 10           \\
rogue\_hold        & 1873   & 10 (Shingle)           & 190                 & 10           \\
rogue\_updown      & 5306   & 10 (Shingle)           & 530                 & 10           \\
speed              & 2486   & 10 (Shingle)           & 250                 & 10           \\ \hline
\end{tabular}
\caption{Data Summary. For the Heart dataset we used the OpenML version \cite{OpenML2013}.
The bottom panel are streaming datasets from \cite{NAB}.}
\label{table: data-details}
\end{table}

\subsection{Experimental Results}
The experimental setup is as in Section \ref{sec: anomaly-detection} and the 
\acrshort{auc} is recorded for each dataset.
We separately test the batch (\iforest{}, \pid{} and \acrshort{bmpf}) and 
streaming methods (\acrshort{smpf}, \rrcf{}).
The results are given in \Cref{tab:pidforest-baseline}: 5 independent trials are 
performed for each dataset with the mean and standard deviation being reported.
We boldface the winner for every dataset and this is used to evaluate the 
mean rank and number of wins from \Cref{tab:mean_ranks}. 
Note that \Cref{tab:mean_ranks} is evaluated for every trial over all datasets, 
whereas \Cref{tab:pidforest-baseline} simply records the winner for the best 
reported mean \acrshort{auc}.
The general behaviour is that both of the Mondrian \Polya{} Forests behave comparably
prior state-of-the-art methods.

\begin{table}[htbp]
\centering
\begin{tabular}{llllll}
\hline
 &
  \acrshort{bmpf} &
  iForest &
  PiDForest &
  \acrshort{smpf} &
  RRCF \\ \hline
\multicolumn{6}{c}{PidForest Baseline Comparision: PyOD} \\ \hline
Thyroid &
  \textbf{0.950  $\pm$ 0.007} &
  0.805  $\pm$ 0.033 &
  0.843  $\pm$ 0.014 &
  \textbf{0.948  $\pm$ 0.004} &
  0.744  $\pm$ 0.006 \\
Mammography &
  \textbf{0.869  $\pm$ 0.007} &
  0.860  $\pm$ 0.004 &
  0.858  $\pm$ 0.011 &
  \textbf{0.866  $\pm$ 0.004} &
  0.831  $\pm$ 0.003 \\
Seismic &
  0.697  $\pm$ 0.007 &
  \textbf{0.714  $\pm$ 0.009} &
  0.710  $\pm$ 0.011 &
  0.621  $\pm$ 0.015 &
  \textbf{0.699  $\pm$ 0.006} \\
Satimage-2 &
  0.991  $\pm$ 0.001 &
  \textbf{0.992  $\pm$ 0.005} &
  0.992  $\pm$ 0.004 &
  0.986  $\pm$ 0.001 &
  \textbf{0.991  $\pm$ 0.003} \\
Vowels &
  \textbf{0.777  $\pm$ 0.025} &
  0.772  $\pm$ 0.024 &
  0.748  $\pm$ 0.003 &
  0.757  $\pm$ 0.020 &
  \textbf{0.817  $\pm$ 0.005} \\
Musk &
  \textbf{1.000  $\pm$ 0.001} &
  \textbf{1.000  $\pm$ 0.000} &
  \textbf{1.000  $\pm$ 0.000} &
  0.972  $\pm$ 0.014 &
  \textbf{0.998  $\pm$ 0.001} \\
HTTP &
  0.996  $\pm$ 0.000 &
  0.997  $\pm$ 0.004 &
  \textbf{0.998  $\pm$ 0.003} &
  \textbf{0.997  $\pm$ 0.000} &
  0.993  $\pm$ 0.000 \\
SMTP &
  0.835  $\pm$ 0.014 &
  \textbf{0.919  $\pm$ 0.003} &
  0.919  $\pm$ 0.006 &
  0.836  $\pm$ 0.009 &
  \textbf{0.886  $\pm$ 0.017} \\ \hline
\multicolumn{6}{c}{PidForest Baseline Comparision: NAB} \\ \hline
NYC &
  0.527  $\pm$ 0.000 &
  \textbf{0.546  $\pm$ 0.082} &
  0.545  $\pm$ 0.082 &
  \textbf{0.558  $\pm$ 0.000} &
  0.537  $\pm$ 0.004 \\
A.T &
  \textbf{0.785  $\pm$ 0.006} &
  0.731  $\pm$ 0.098 &
  0.730  $\pm$ 0.096 &
  \textbf{0.773  $\pm$ 0.016} &
  0.693  $\pm$ 0.008 \\
CPU &
  \textbf{0.913  $\pm$ 0.002} &
  0.818  $\pm$ 0.149 &
  0.815  $\pm$ 0.148 &
  \textbf{0.911  $\pm$ 0.002} &
  0.786  $\pm$ 0.004 \\
M.T &
  \textbf{0.822  $\pm$ 0.003} &
  0.740  $\pm$ 0.138 &
  0.740  $\pm$ 0.138 &
  \textbf{0.820  $\pm$ 0.007} &
  0.749  $\pm$ 0.005 \\ \hline
\multicolumn{6}{c}{All other PyOD Datasets} \\ \hline
Annthyroid &
  0.663 $\pm$ 0.012 &
  0.809 $\pm$ 0.014 &
  \textbf{0.880 $\pm$ 0.008} &
  0.663 $\pm$ 0.013 &
  \textbf{0.741 $\pm$ 0.004} \\
Arrhythmia &
  \textbf{0.813 $\pm$ 0.010} &
  0.799 $\pm$ 0.009 &
  - &
  0.549 $\pm$ 0.033 &
  \textbf{0.787 $\pm$ 0.002} \\
Breastw &
  0.973 $\pm$ 0.001 &
  \textbf{0.986 $\pm$ 0.001} &
  0.973 $\pm$ 0.001 &
  \textbf{0.979 $\pm$ 0.004} &
  0.644 $\pm$ 0.004 \\
Cardio &
  0.910 $\pm$ 0.016 &
  \textbf{0.923 $\pm$ 0.005} &
  0.860 $\pm$ 0.012 &
  0.873 $\pm$ 0.038 &
  \textbf{0.898 $\pm$ 0.004} \\
Cover &
  0.772 $\pm$ 0.018 &
  \textbf{0.910 $\pm$ 0.000} &
  0.841 $\pm$ 0.000 &
  \textbf{0.741 $\pm$ 0.044} &
  0.674 $\pm$ 0.005 \\
Ecoli &
  \textbf{0.881 $\pm$ 0.012} &
  0.857 $\pm$ 0.006 &
  0.859 $\pm$ 0.007 &
  \textbf{0.900 $\pm$ 0.039} &
  0.858 $\pm$ 0.002 \\
Glass &
  \textbf{0.798 $\pm$ 0.006} &
  0.708 $\pm$ 0.008 &
  0.690 $\pm$ 0.023 &
  \textbf{0.824 $\pm$ 0.018} &
  0.721 $\pm$ 0.013 \\
Heart &
  0.203 $\pm$ 0.012 &
  \textbf{0.251 $\pm$ 0.010} &
  0.233 $\pm$ 0.033 &
  \textbf{0.237 $\pm$ 0.019} &
  0.210 $\pm$ 0.010 \\
Ionosphere &
  \textbf{0.877 $\pm$ 0.004} &
  0.855 $\pm$ 0.005 &
  0.844 $\pm$ 0.014 &
  0.891 $\pm$ 0.005 &
  \textbf{0.896 $\pm$ 0.002} \\
Letter &
  0.621 $\pm$ 0.022 &
  0.633 $\pm$ 0.015 &
  \textbf{0.643 $\pm$ 0.025} &
  0.623 $\pm$ 0.021 &
  \textbf{0.735 $\pm$ 0.008} \\
Lympho &
  0.984 $\pm$ 0.007 &
  \textbf{0.997 $\pm$ 0.003} &
  \textbf{0.997 $\pm$ 0.003} &
  0.975 $\pm$ 0.017 &
  \textbf{0.993 $\pm$ 0.001} \\
Mnist &
  \textbf{0.807 $\pm$ 0.022} &
  0.804 $\pm$ 0.009 &
  - &
  \textbf{0.812 $\pm$ 0.037} &
  0.770 $\pm$ 0.004 \\
Optdigits &
  0.704 $\pm$ 0.044 &
  \textbf{0.706 $\pm$ 0.026} &
  - &
  \textbf{0.650 $\pm$ 0.150} &
  0.529 $\pm$ 0.013 \\
Pendigits &
  0.929 $\pm$ 0.006 &
  \textbf{0.952 $\pm$ 0.006} &
  0.947 $\pm$ 0.010 &
  \textbf{0.913 $\pm$ 0.004} &
  0.869 $\pm$ 0.011 \\
Pima &
  0.658 $\pm$ 0.006 &
  \textbf{0.680 $\pm$ 0.015} &
  0.679 $\pm$ 0.012 &
  \textbf{0.601 $\pm$ 0.005} &
  0.593 $\pm$ 0.005 \\
Satellite &
  0.704 $\pm$ 0.005 &
  \textbf{0.717 $\pm$ 0.021} &
  0.697 $\pm$ 0.031 &
  \textbf{0.719 $\pm$ 0.012} &
  0.684 $\pm$ 0.003 \\
Shuttle &
  0.505 $\pm$ 0.000 &
  \textbf{0.997 $\pm$ 0.000} &
  0.988 $\pm$ 0.011 &
  0.506 $\pm$ 0.000 &
  \textbf{0.909 $\pm$ 0.004} \\
Speech &
  0.475 $\pm$ 0.017 &
  0.474 $\pm$ 0.018 &
  \textbf{0.484 $\pm$ 0.015} &
  \textbf{0.492 $\pm$ 0.033} &
  0.470 $\pm$ 0.023 \\
Vertebral &
  0.352 $\pm$ 0.034 &
  \textbf{0.359 $\pm$ 0.006} &
  0.332 $\pm$ 0.034 &
  \textbf{0.394 $\pm$ 0.032} &
  0.390 $\pm$ 0.004 \\
WBC &
  \textbf{0.950 $\pm$ 0.005} &
  0.941 $\pm$ 0.007 &
  0.945 $\pm$ 0.010 &
  \textbf{0.925 $\pm$ 0.008} &
  0.921 $\pm$ 0.004 \\
Wine &
  \textbf{0.951 $\pm$ 0.005} &
  0.746 $\pm$ 0.025 &
  0.777 $\pm$ 0.037 &
  0.882 $\pm$ 0.013 &
  \textbf{0.962 $\pm$ 0.003} \\
Yeast &
  0.989 $\pm$ 0.000 &
  \textbf{0.996 $\pm$ 0.001} &
  0.990 $\pm$0.008 &
  \textbf{0.990 $\pm$ 0.001} &
  0.980 $\pm$ 0.002 \\
  \hline
\multicolumn{6}{c}{Other NAB Datasets} \\
\hline
ad\_exchange &
  0.621 $\pm$ 0.005 &
  \textbf{0.665 $\pm$ 0.004} &
  0.660 $\pm$ 0.006 &
  0.625 $\pm$ 0.010 &
  \textbf{0.635 $\pm$ 0.006} \\
aws\_cloud\_cpu &
  \textbf{0.608 $\pm$ 0.005} &
  0.561 $\pm$ 0.008 &
  0.574 $\pm$ 0.006 &
  0.587 $\pm$ 0.002 &
  \textbf{0.601 $\pm$ 0.003} \\
google\_tweets &
  \textbf{0.645 $\pm$ 0.007} &
  0.573 $\pm$ 0.007 &
  0.620 $\pm$ 0.007 &
  0.632 $\pm$ 0.017 &
  \textbf{0.637 $\pm$ 0.008} \\
rogue\_hold &
  0.399 $\pm$ 0.004 &
  \textbf{0.474 $\pm$ 0.009} &
  0.452 $\pm$ 0.004 &
  0.399 $\pm$ 0.009 &
  \textbf{0.480 $\pm$ 0.001} \\
rogue\_updown &
  0.497 $\pm$ 0.000 &
  0.494 $\pm$ 0.008 &
  \textbf{0.499 $\pm$ 0.000} &
  \textbf{0.501 $\pm$ 0.005} &
  0.493 $\pm$ 0.000 \\
speed &
  0.557 $\pm$ 0.014 &
  0.557 $\pm$ 0.006 &
  \textbf{0.566 $\pm$ 0.005} &
  \textbf{0.568 $\pm$ 0.018} &
  0.548 $\pm$ 0.010 \\ \hline
Num. AUC wins &
  16 &
  19 &
  8 &
  23 &
  17 \\ \hline
\end{tabular}
\caption{Anomaly detection experiments from \Cref{sec: anomaly-detection}. 
The top two panels ``PidForest Baseline Comparison\dots'' 
is a direct comparison to 
Table 1 of \cite{gopalan2019pidforest}.
The middle panel is all other \acrshort{pyod} datasets and the bottom
panel is a selection of other datasets from the NAB repository.
Columns with ``-''for 
\pid{} indicate failed executions due to the error ``No entropy in chosen feature''.
The three leftmost methods are the batch forests, while the two right most methods
are the streaming methods.
Winners are written in boldface, the batch methods are compared against one 
another separately from the streaming methods.}
\label{tab:pidforest-baseline}
\end{table}

\subsection{Classical Batch Methods}
Anomaly detection is a classification problem with
imbalanced classes consisting of a (large) `normal' subset of data, and
a small subset containing anomalies.
One could adapt supervised learning techniques (e.g a One-Class 
Support Vector Machines (\acrshort{1csvm}) 
\cite{scholkopf2001estimating}) but
labelling anomalies is time-consuming \& expensive so
supervised learning is incompatible with the large-scale streaming model.
For instance, 
training a \acrshort{1csvm} takes time between
$O(n^2)$ and $O(n^3)$ depending on
the sizes of $n$ and $D$ \cite{bottou2007support}.
Unsupervised methods have also been proposed which rely on some
notion of local or global clustering.
For example, Local Outlier Factor
(\acrshort{lof}) \cite{breunig2000lof};
$k$-Nearest Neighbours (\acrshort{knn}) (\cite{ramaswamy2000efficient}, \cite{angiulli2002fast});
or Principal Components Analysis (\acrshort{pca}),
(\cite{shyu2003novel}, \cite{aggarwal2015outlier}).
However, the time complexity of these methods can scale
quadratically with $n$ or $D$ so are unsuitable in the 
large-scale or high-dimensional setting.

We are interested in \emph{unsupervised} methods: typically, these 
approaches rely on some notion of local or global clustering, for example Local Outlier Factor 
(\acrshort{lof}) \cite{breunig2000lof}, $k$-Nearest Neighbours (\acrshort{knn}) (\cite{ramaswamy2000efficient}, \cite{angiulli2002fast}),
or Principal Components Analysis (\acrshort{pca}),
(\cite{shyu2003novel}, \cite{aggarwal2015outlier}).
These solutions do not scale for large-scale and high-dimensional datasets in the offline 
setting, let alone when we are constrained to the data stream model;
consider input data $\mX \in \R^{n \times D}$,
\acrshort{lof} requires time at least $\Omega(n)$, 
but for high dimensions requires $\Theta(n^2)$ 
time \cite{breunig2000lof}.
Additionally, \acrshort{pca} requires 
a singular value decomposition (\acrshort{svd}) which takes
time $O(nD^2)$. 
Using these datasets in the large-scale batch setting is problematic because of 
the overhead incurred, let alone when we are further constrained to the 
streaming environment.
Due to the scalability of the batch offline methods, we only present the 
results on a small subset of the datasets tested: these are given in 
\Cref{tab:non-rf-baseline}.

\begin{table}[htbp]
    \centering
\begin{tabular}{lllll}
\toprule
                                      &   SVM &   LOF & kNN             & PCA             \\
\midrule
 http                                 & 0.231 & 0.996 & $0.353$ & $\bm{0.999}$ \\
 mammography                          & 0.839 & 0.886 & $0.720$ & $\bm{0.872}$ \\
 musk                                 & 0.373 & \textbf{1.000} & $0.416$ & $\bm{1.000}$ \\
 satimage-2                           & 0.936 & \textbf{0.977} & $0.540$ & $0.996$ \\
 siesmic                              & \textbf{0.740} & 0.682 & $0.553$ & $0.589$ \\
 smtp                                 & 0.895 & 0.823 & $\bm{0.904}$ & $0.898$ \\
 thyroid                              & \textbf{0.751} & 0.673 & $0.737$ & $0.573$ \\
 vowels                               & \textbf{0.975} & 0.606 & $0.943$ & $0.778$ \\
 nyc\_taxi                             & \textbf{0.697} & 0.511 & $0.671$ & $0.453$ \\
 ambient\_temperature\_system\_failure   & 0.634 & \textbf{0.792} & $0.563$ & $0.783$ \\
 cpu\_utilization\_asg\_misconfiguration & 0.724 & 0.858 & $0.560$ & $\bm{0.898}$ \\
 machine\_temperature\_system\_failure   & 0.759 & \textbf{0.834} & $0.501$ & $0.822$ \\
\bottomrule
\end{tabular}
    \caption{Baseline Experiments. Non Random Forest Methods}
    \label{tab:non-rf-baseline}
\end{table}

 \subsection{Running Time}
 \begin{table}[]
    \centering
    \begin{tabular}{llll}
\toprule
                                      & \acrshort{smpf}   & \pid           & Approx. Speedup   \\
\midrule
thyroid                              & $5.4 \pm 0.1$           & $25.8 \pm 27.2$   & $5.0$     \\
mammography                          & $5.0 \pm 0.1$           & $9.5 \pm 0.1$     & $2.0$     \\
seismic                              & $1.0 \pm 0.0$           & $26.4 \pm 0.5$    & $28.0$    \\
satimage-2                           & $22.0 \pm 0.6$          & $25.5 \pm 0.5$    & $1.0$     \\
vowels                               & $9.3 \pm 0.2$           & $22.4 \pm 0.4$    & $2.0$     \\
musk                                 & $53.8 \pm 3.0$          & $170.3 \pm 4.4$   & $3.0$     \\
http                                 & $91.5 \pm 3.3$          & $260.9 \pm 3.9$   & $3.0$     \\
smtp                                 & $20.4 \pm 0.9$          & $370.4 \pm 646.0$ & $18.0$    \\
\midrule 
NYC                             & $15.9 \pm 0.0$          & $24.8 \pm 0.6$    & $2.0$     \\
A.T   & $6.6 \pm 0.1$           & $22.5 \pm 1.0$    & $3.0$     \\
CPU & $9.2 \pm 0.1$           & $22.3 \pm 0.2$    & $2.0$     \\
M.T   & $11.3 \pm 0.6$          & $29.1 \pm 0.9$    & $3.0$     \\
\midrule 
 annthyroid                           & $7.5 \pm 0.4$           & $11.5 \pm 0.7$    & $2.0$     \\
 arrhythmia                           & $1.6 \pm 0.0$           & -                 & -     \\
 breastw                              & $3.0 \pm 0.1$           & $8.2 \pm 0.1$     & $3.0$     \\
 cardio                               & $1.0 \pm 0.0$           & $17.4 \pm 0.2$    & $17.0$    \\
 cover                                & $297.7 \pm 6.1$         & $1181.4 \pm 0.0$  & $4.0$     \\
 ecoli                                & $0.2 \pm 0.1$           & $10.1 \pm 0.3$    & $42.0$    \\
 glass                                & $1.6 \pm 0.6$           & $9.5 \pm 0.2$     & $6.0$     \\
 heart                                & $7.0 \pm 0.1$           & $22.8 \pm 0.0$    & $3.0$     \\
 ionosphere                           & $4.1 \pm 0.3$           & $35.9 \pm 0.4$    & $9.0$     \\
 letter                               & $9.4 \pm 0.3$           & $26.7 \pm 0.2$    & $3.0$     \\
 lympho                               & $0.3 \pm 0.0$           & $9.0 \pm 0.1$     & $29.0$    \\
 mnist                                & $3.6 \pm 0.2$           & -                 & -     \\
 optdigits                            & $2.4 \pm 0.1$           & -                 & -    \\
 pendigits                            & $17.0 \pm 0.3$          & $21.0 \pm 0.3$    & $1.0$     \\
 pima                                 & $4.1 \pm 0.1$           & $8.8 \pm 0.1$     & $2.0$     \\
 satellite                            & $25.3 \pm 0.7$          & $58.8 \pm 1.1$    & $2.0$     \\
 shuttle                              & $14.9 \pm 0.3$          & $209.4 \pm 2.4$   & $14.0$    \\
 speech                               & $188.7 \pm 3.1$         & $618.3 \pm 1.2$   & $3.0$     \\
 vertebral                            & $3.0 \pm 0.1$           & $11.2 \pm 0.2$    & $4.0$     \\
 wbc                                  & $7.8 \pm 0.7$           & $50.8 \pm 0.6$    & $7.0$     \\
 wine                                 & $3.3 \pm 0.0$           & $15.3 \pm 0.1$    & $5.0$     \\
 yeast                                & $0.5 \pm 0.0$           & $14.2 \pm 0.9$    & $28.0$    \\
\bottomrule
\end{tabular}
    \caption{Runtime comparison in wallclock time (seconds) for completion.
    Panes separated as in \Cref{tab:pidforest-baseline}.
    \acrshort{smpf} contains $n$ points per tree whereas
    \pid{} contains only 100 points per tree.}
    \label{table:runtime-comparison}
\end{table}

Although not the focus of this investigation, we present an interesting 
contrast between our method and \pid~ in terms of running time.
These results are summarised in \Cref{table:runtime-comparison} in which the wall 
clock time necessary to perform the forest sampling from the previous
experiment (\Cref{tab:pidforest-baseline}) is recorded.
We compare only \acrshort{smpf} and \pid~ as both \rrcf{} and \iforest{}
are heavily optimised and the other methods are not suitable for 
streaming data.
Recall that our algorithm uses all datapoints in $\mX$ to 
(i) cut the data at random,
(ii) update model parameters for probability mass estimation.
While the cutting is cheap, it is likely that the cuts may not be 
informative which is why the second corrective step is required.

\pid{} takes a complementary approach by optimising for the 
cut at every level rather than cutting at random, using only a small 
subset of the data to build the tree.
Our findings suggest that it is more efficient to make random cuts and 
update the parameters of the density model than solving the optimisation
problem for \pid{}.
This is borne out in \Cref{table:runtime-comparison}, \Cref{sec: anom-appendix} where our streaming 
implementation of \acrshort{mpf} is at least a (small) constant factor
quicker than
\pid{}, but can reach almost 50x (approximate) speedup over the time it 
takes to fit a \pid{}.
Of further interest is the fact that we use \emph{all} datapoints per 
tree, whereas \pid{} uses only 100 points per tree meaning that,
in aggregate, our method is substantially faster.
While both implementations of \acrshort{smpf} and \pid{} are 
proof-of-concept, the similarity of our proposed \acrshort{bmpf} and 
\acrshort{smpf} to the \iforest{} and \rrcf{} suggests that it should be
substantial room for improvement, achieving runtime comparable to the best
implementations of each.

\subsection{Statistical Analysis}

We use repeated measures ANOVA as an omnibus test to determine if there are any significant differences between the mean values of the populations, shown in \Cref{tab:anova}. We reject the null hypothesis ($F=104.844$, $p<0.001$) of the repeated measures ANOVA that there is a difference between the mean values of the for the independent variable of algorithm (the dataset and interaction were also significant). Therefore, we assume that there is a statistically significant difference between the mean values of the populations. Given that the results of the ANOVA test are significant, for post-hoc testing we use the paired two-way t-tests to infer which differences are significant. 
The results are shown in \Cref{tab:post_hoc}. The results at the $p<0.01$ level that show that bMPF and iForest both significantly outperform sMPF, all methods significantly outperform RRCF. All other comparisons failed to reach significance, indicating that based on these experiments, these methods cannot be separated from one another.

\begin{table}[htbp]
\label{tab:anova}
\caption{2-way repeated measures ANOVA (F-statistic) for the main effects of algorithm, dataset, and interaction effects. ddof1/ddof2 are the degrees of freedom for the factor/replicates.}
\centering
\begin{tabular}{lrrrrrrrrr}
\toprule
 Source              &   ddof1 &   ddof2 &        F &   p \\
\midrule
 algorithm           &       4 &      16 &  104.844 &   0.000 \\
 dataset             &      11 &      44 & 4216.040 &   0.000 \\
 algorithm * dataset &      44 &     176 &  104.563 &   0.000 \\
\bottomrule
\end{tabular}
\end{table}

\begin{table}[htbp]
\label{tab:post_hoc}
\caption{Post-hoc paired 2-sample t-tests for the main effect of algorithm. Bold results indicate significance at the $p<0.01$ level.}
\centering
\begin{tabular}{llrrrr}
\toprule
 A       & B         &       T &   p &    BF10 &   hedges \\
\midrule
 sMPF    & bMPF      &  -5.015 &   {\bf 0.007} &   8.755 &   -3.056 \\
 sMPF    & RRCF      &  13.466 &   {\bf 0.000} & 135.300 &    8.626 \\
 sMPF    & iForest   &  -4.761 &   {\bf 0.009} &   7.669 &   -2.181 \\
 sMPF    & PiDForest &  -0.426 &   0.692 &   0.428 &   -0.279 \\
 bMPF    & RRCF      &  14.553 &   {\bf 0.000} & 169.618 &   10.640 \\
 bMPF    & iForest   &   3.262 &   0.031 &   3.120 &    1.710 \\
 bMPF    & PiDForest &   3.635 &   0.022 &   3.985 &    2.744 \\
 RRCF    & iForest   & -22.731 &   {\bf 0.000} & 629.989 &  -13.034 \\
 RRCF    & PiDForest & -18.685 &   {\bf 0.000} & 353.026 &   -8.654 \\
 iForest & PiDForest &   2.477 &   0.068 &   1.777 &    1.758 \\
\bottomrule
\end{tabular}

\end{table}

\subsection{NAB Datasets}
The result in \Cref{tab:pidforest-baseline} often suggest that the \acrshort{auc} for the NAB datasets can be relatively low.
Additionally, sometimes our streaming method appears to lose out to the 
\rrcf{} approach.
We suggest that part of the reason here for the slightly diminished 
\acrshort{auc} performance could be to do with the labelling of the 
NAB datasets.
The anomalies are not labelled as specific datapoints, but rather windows
or intervals which contain an anomaly.
This can clearly hurt the performance of a detector as not detecting an
anomaly at the start of a window (which may well be \emph{normal}
behaviour) would be recorded as incorrect predictions in the NAB labelling
scheme.
Likewise, the same applies if a detector quickly returns to normal 
behaviour after the anomaly despite the labelling suggesting that the data
index still lies in an anomalous window.
Both of these behaviours are observed in Figures \ref{fig:trace-rogue} and
\ref{fig:trace-adexch}.

\begin{figure}
    \centering
    \includegraphics[width=0.9\linewidth]{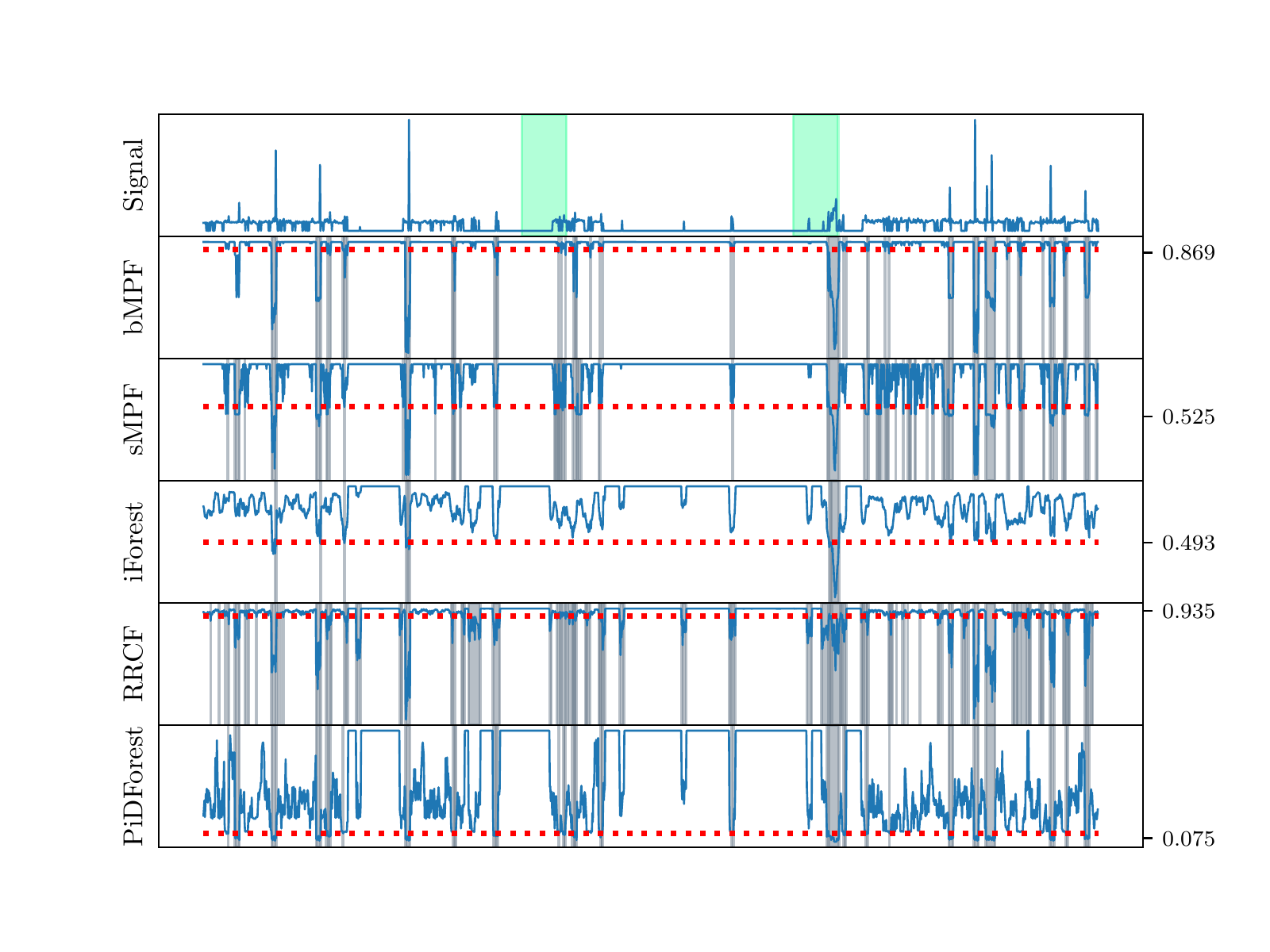}
    \caption{``Rogue\_hold'' trace denoted by the blue curve in each panel.
    The top panel illustrates the ground truth anomalies with their 
    associated window in green.
    Flagged anomalies are in the grey shading and the red dashed line
    is the threshold which achieves the optimum \acrshort{auc}.
    }
    \label{fig:trace-rogue}
\end{figure}

\begin{figure}
    \centering
    \includegraphics[width=0.9\linewidth]{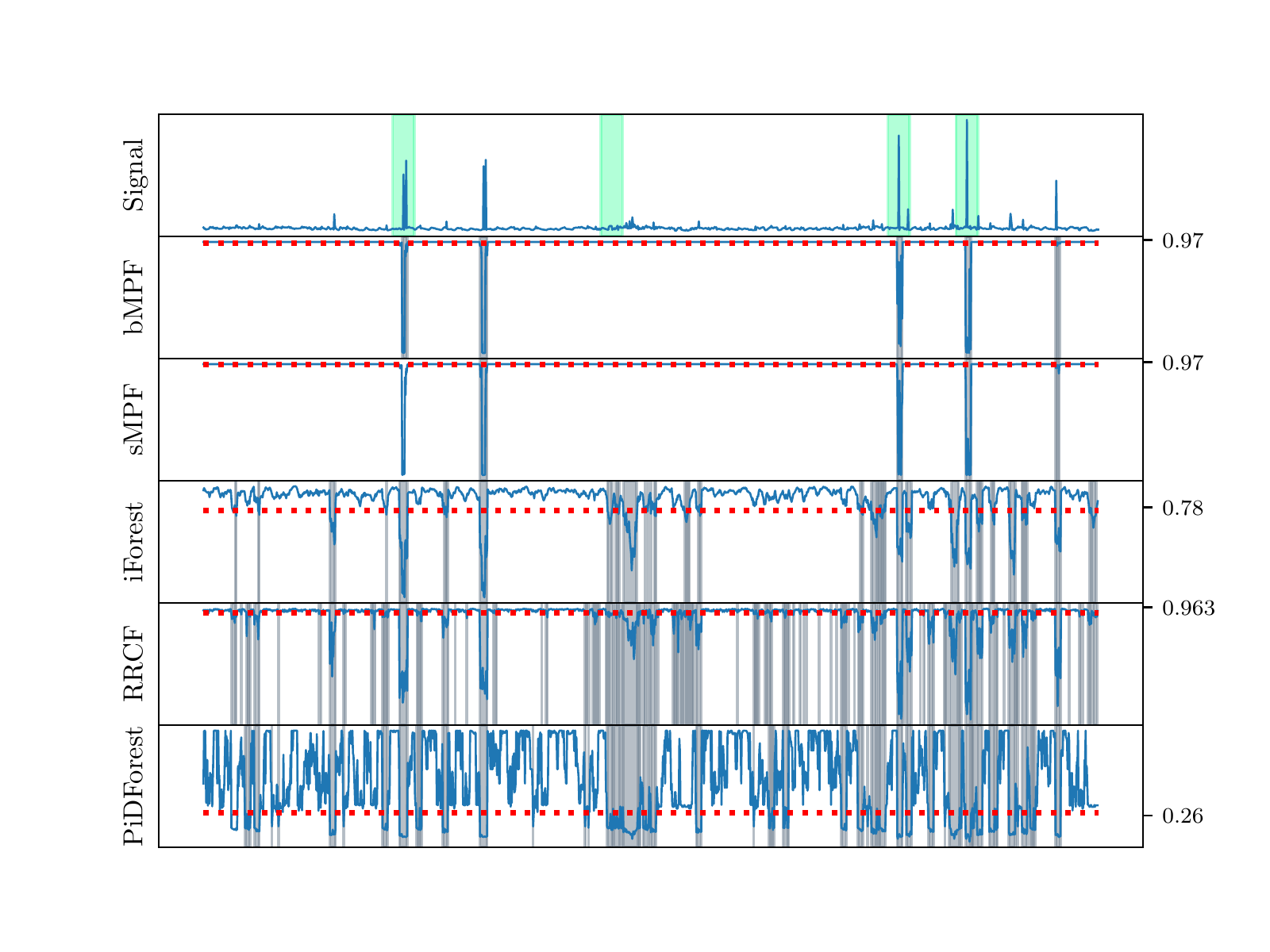}
    \caption{``Ad\_exchange'' trace.
    Plots as described in Figure \ref{fig:trace-rogue}}
    \label{fig:trace-adexch}
\end{figure}

\newpage
\section{Illustrative Examples}
\label{sec: synthetic-examples}

\subsection{2d Toy Datasets}
We provide a simple comparison of the methods on all of the baseline
synthetic examples taken from the scikit-learn outlier detection 
page \cite{outlier-sklearn} which contains unimodal and bimodal data.
The data is of size $n=500$ which is split between 
$n_{\text{inliers}} = 425$ inlier points and the remaining 
$n_{\text{outliers}} =75$ being planted outliers chosen uniformly over 
the input domain.
For visual comparison, we plot the resulting classification induced by 
each of the random forest methods at the optimum threshold.
The results are illustrated in Figure \ref{fig:sklearn-outlier}.
We additionally record the area under the ROC curve (AUC) and 
area under the
precision-recall-gain (PRG) curve in Table \ref{table:synthetic-auprg}
\cite{flach2015precision}.
Area under a precision-recall curve is not justified, instead use area under the PRG curve.
We use \cite{prg} to evaluate the Precision-Recall-Gain and observe that
again our methods perform well compared to other random forests.
These results are presented in \Cref{table:synthetic-auprg} but a more
in-depth study is deferred for future work.
\begin{figure}
    \centering
 \includegraphics[width=0.9\linewidth]{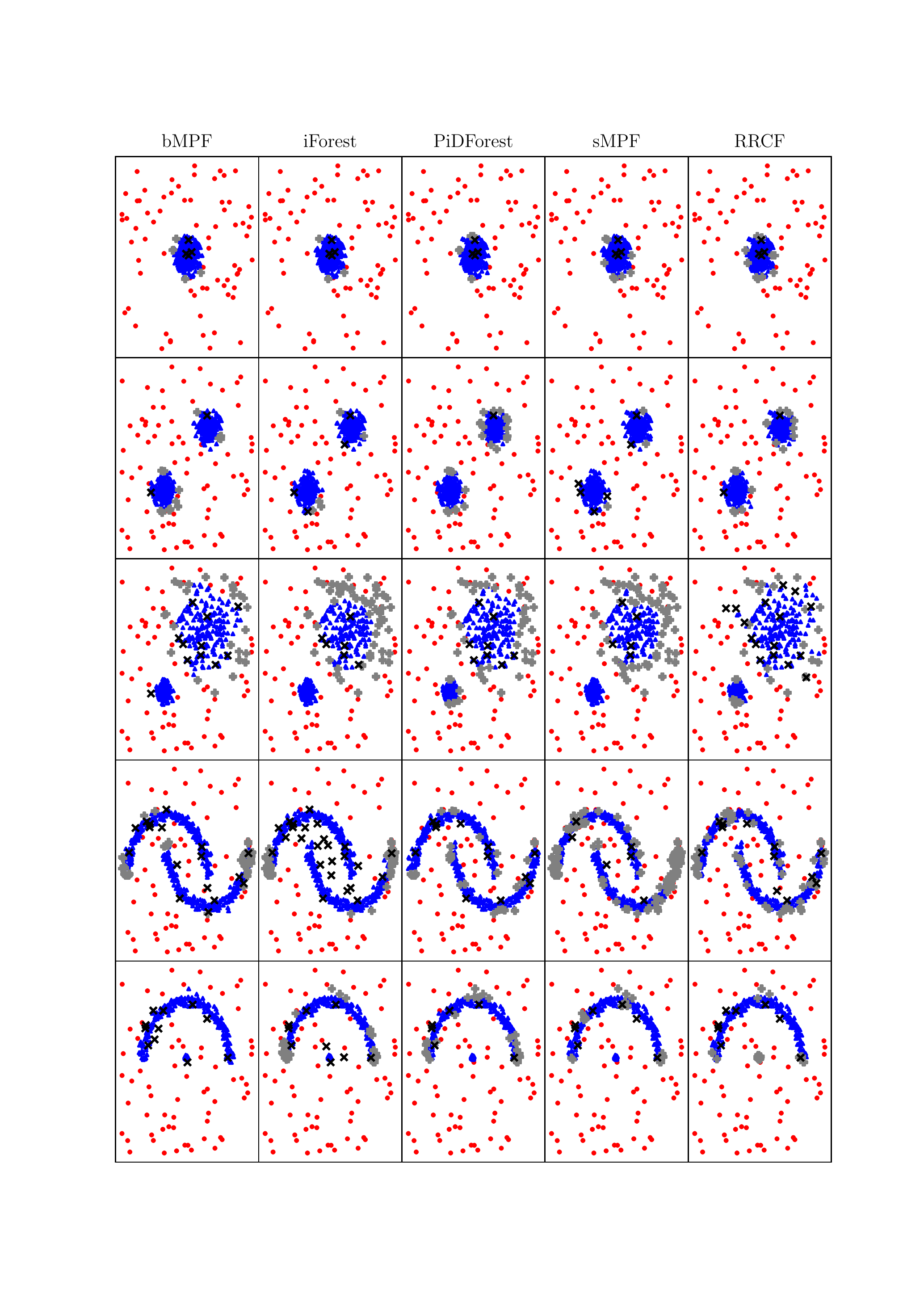}
    \caption{Random Forest Methods on sklearn outlier detection toy
    datasets.
    True positives are in red circles, true negatives in blue triangles,
    false positive in grey $+$, and false negatives in black crosses.
    Black crosses near the modes are often misclassified by all methods;
    these correspond to planted anomalies that lie in the normal region.}
    \label{fig:sklearn-outlier}
\end{figure}

\begin{table}[ht]
\centering
\begin{tabular}{llllll}
\hline
Dataset          & \multicolumn{5}{c}{AUC}                     \\ 
                 & sMPF  & bMPF  & RRCF  & iForest & PidForest \\ \hline 
Single Blob      & 0.963 & 0.966 & 0.972 & 0.964   & 0.963     \\
Two Blobs Tight  & 0.994 & 0.994 & 0.991 & 0.994   & 0.993     \\
Two Blobs Spread & 0.948 & 0.956 & 0.934 & 0.953   & 0.960     \\
Moons            & 0.904 & 0.901 & 0.906 & 0.840   & 0.914     \\
Moon \& Blob     & 0.977 & 0.964 & 0.950 & 0.955   & 0.972     \\ \hline
                 & \multicolumn{5}{c}{AUPRG}                   \\ 
                 & sMPF  & bMPF  & RRCF  & iForest & PidForest \\ \hline
Single Blob      & 0.994 & 0.993 & 0.991 & 0.993   & 0.993     \\
Two Blobs Tight  & 0.999 & 0.998 & 0.998 & 0.999   & 0.999     \\
Two Blobs Spread & 0.986 & 0.989 & 0.982 & 0.984   & 0.990     \\
Moons            & 0.971 & 0.978 & 0.976 & 0.947   & 0.979     \\
Moon \& Blob     & 0.996 & 0.997 & 0.979 & 0.990   & 0.994   \\
\bottomrule
\end{tabular}
\caption{AUC and AUPRG values for \Cref{fig:sklearn-outlier}}
\label{table:synthetic-auprg}
\end{table}

\section{Density Estimation}
We provide 4 synthetic examples to illustrate the use of our proposed models.
A more significant experimental study will be necessary to evaluate the 
efficacy of the models in this context.
The synthetic datasets given below are used to generate an initial sample of 
5000 points, after which a grid is placed over the domain to estimate the 
density.
Further investigation is necessary to understand the efficicacy of both 
Mondrian \Polya{} Forests as density estimators along with a comparison to 
popular methods.

\begin{enumerate}
    \item{Standard Normal: Figure \ref{fig:univariate-density} (left)}
    \item{Univariate Gaussian Mixture: 
    Figure \ref{fig:univariate-density} (right)
    taken from 
    \url{https://scikit-learn.org/stable/auto_examples/neighbors/plot_kde_1d.html}}
    \item{Standard Bivariate Normal: Figure \ref{fig:bivariate-density}
    (left)}
    
    \item{Bivariate Bimodal Mixture: 
    Figure \ref{fig:bivariate-density} (right).
    As in the univariate case, except the 
    covariances are adjusted to alter the shape of the clusters.
    Also the dataset used in \Cref{fig:all-mondrians}.} 
\end{enumerate}

\begin{figure}
    \centering
    \includegraphics[width=0.9\linewidth]{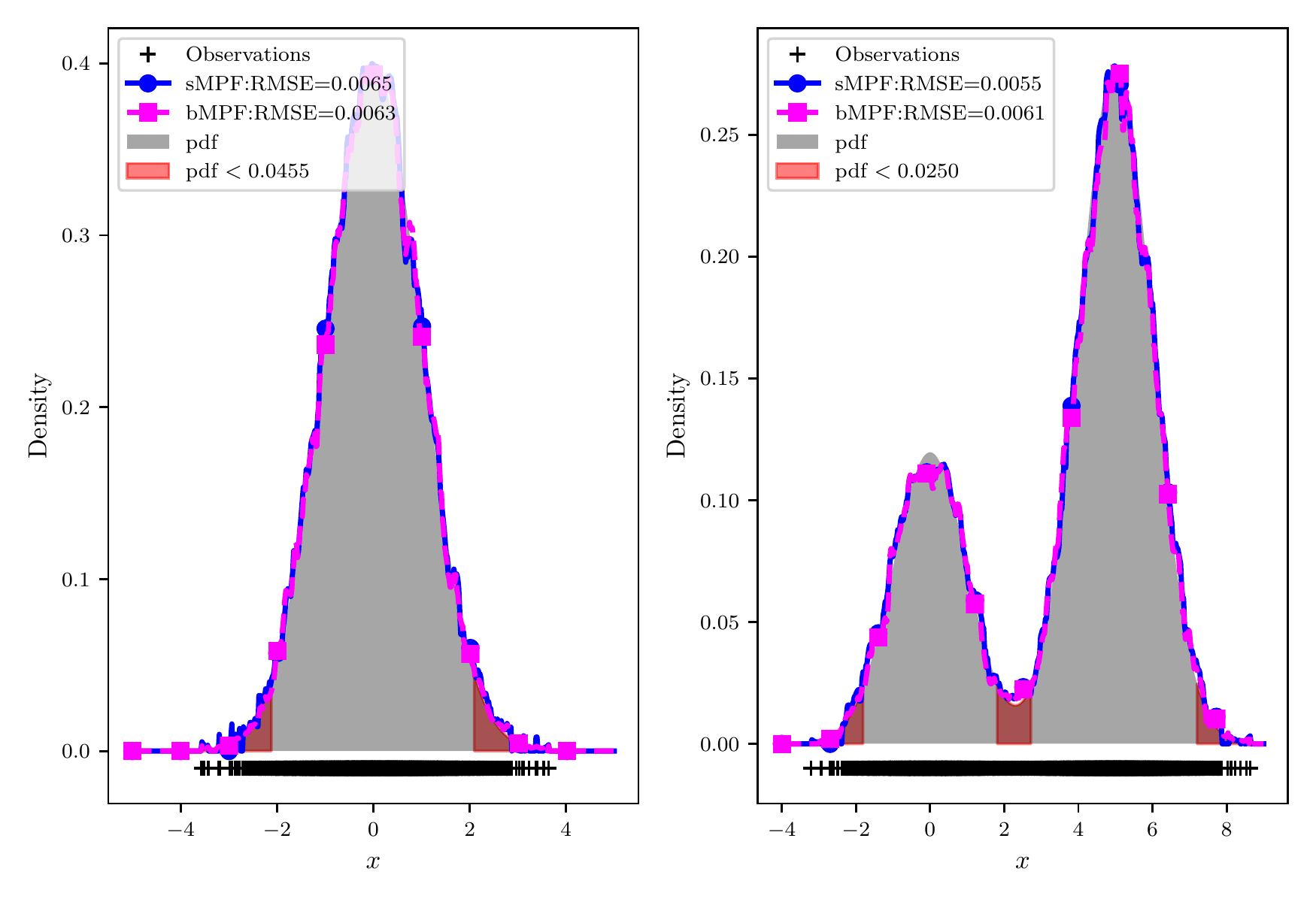}
    \caption{Density estimation on univariate Gaussians}
    \label{fig:univariate-density}
\end{figure}

\begin{figure}
    \centering
    \includegraphics[width=0.9\linewidth]{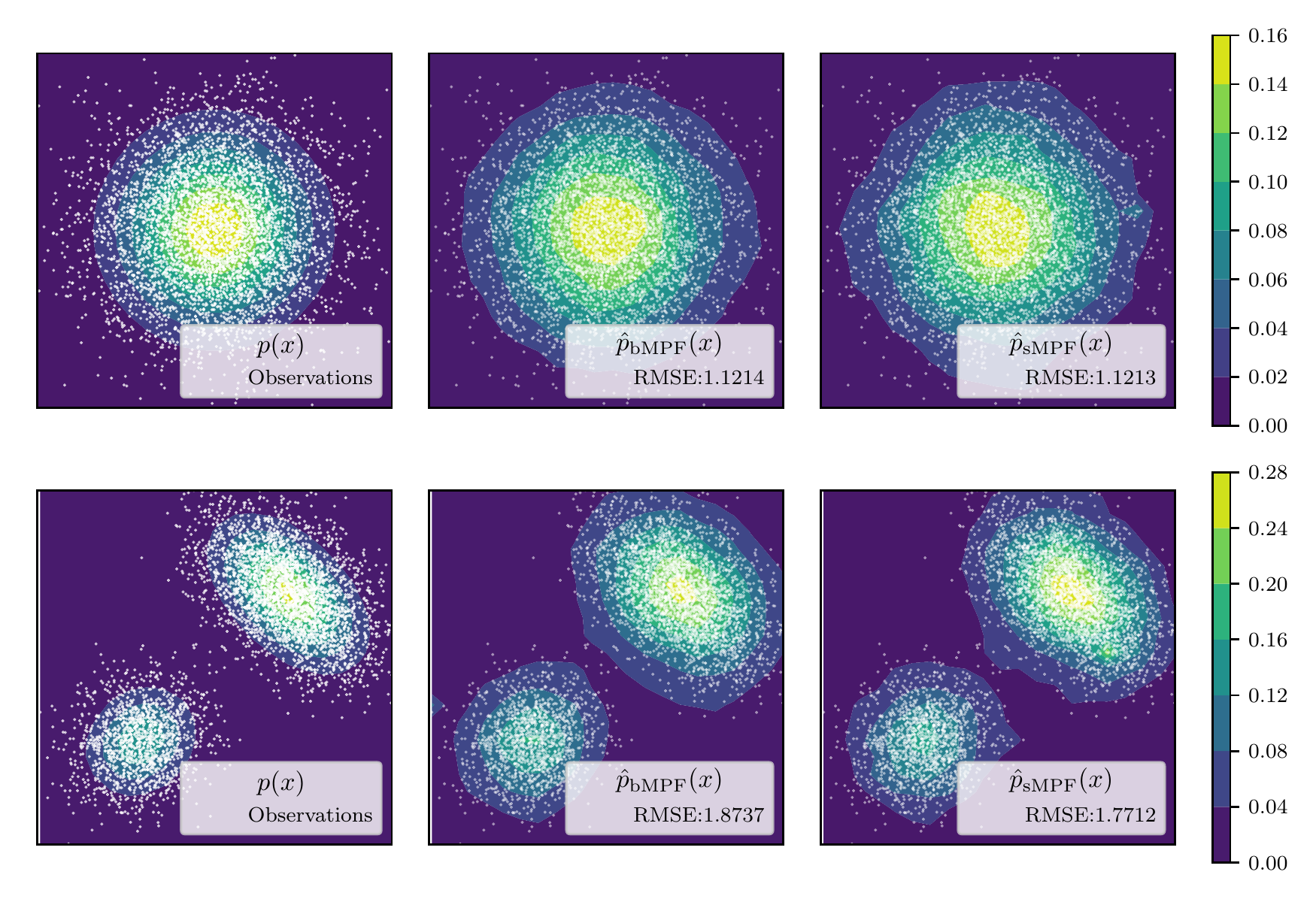}
    \caption{Density estimation on bivariate Gaussians.
    Left-right: True density function, \acrshort{bmpf}, \acrshort{smpf}.}
    \label{fig:bivariate-density}
\end{figure}

\end{document}